\def\eqref#1{equation~\ref{#1}}
\def\1{\bm{1}}
\DeclareMathAlphabet{\mathsfit}{\encodingdefault}{\sfdefault}{m}{sl}
\SetMathAlphabet{\mathsfit}{bold}{\encodingdefault}{\sfdefault}{bx}{n}
\pgfplotsset{compat=1.18}
\newtheorem{claim}{Claim}
\renewenvironment{proof}[1][Proof]{\noindent\textbf{#1.} }{\qed}
\definecolor{hotpurple}{RGB}{180, 80, 200}
\title{Go Beyond Your Means: Unlearning with Per-Sample Gradient Orthogonalization}
\newcommand\ourmethod{OrthoGrad}
\author{
   Aviv Shamsian$^1$\thanks{Equal contribution, corresponding author: aviv.shamsian@live.biu.ac.il} \quad
  Eitan Shaar$^{2*}$ \quad
   Aviv Navon$^2$ \quad
  Gal Chechik$^{1,3}$ \quad
  Ethan Fetaya$^1$ \\ \\
$^{1}$Bar Ilan University  \quad
$^{2}$Independent Researcher \quad\
$^{3}$NVIDIA Research \quad
}
\begin{document}
\maketitle
\begin{abstract}
Machine unlearning aims to remove the influence of problematic training data after a model has been trained. The primary challenge in machine unlearning is ensuring that the process effectively removes specified data without compromising the model's overall performance on the remaining dataset.
Many existing machine unlearning methods address this challenge by carefully balancing gradient ascent on the `unlearn' data with the gradient descent on a `retain' set that represents the training data. However, in many cases the training dataset is not fully available when we wish to unlearn some concepts, because models are released without their training datasets, and one may only have access to a \textit{small part of a training set}. Here, we propose \ourmethod{}, a novel approach that mitigates interference between the unlearn set and a small retain set rather than competing ascent and descent processes. Our method projects the gradient of the unlearn set onto the subspace orthogonal to all gradients in the retain batch, effectively avoiding any gradient interference. We demonstrate the effectiveness of \ourmethod{} on multiple machine unlearning benchmarks, including automatic speech recognition, outperforming competing methods.
\end{abstract}

\section{Introduction}
Foundation models are trained on web-scale datasets, which may contain undesirable data: illegal, proprietary, or privacy-infringing. 
For example, Github Copilot~\citep{dakhel2023github, sirovs2024github} faced criticism for generating code snippets directly from open-source repositories without attribution, 
and the LAION-5B dataset~\citep{Schuhmann2022LAION5BAO}
had to be temporarily removed when it was discovered it contained CSAM images  \citep{thiel2023identifying}. 
Another type of undesirable data is the case were users may ask to `opt out' and to not be recognized by the system. For example, a user might want a speech recognition system to not transcribe his audio recordings. 
In all these cases, one is interested to ``remove'' or ``forget'' information from a pre-trained model, either general knowledge or specific information.

These challenges led to a growing recent interest in \emph{machine unlearning}~\citep{liu2024threats,nguyen2022survey}. In this setup, we wish to remove the effects of a given part of the training data on a pretrained model while preserving its generalization performance. In practice, we are given an \emph{unlearn set} that we wish to forget and a \emph{retain set} that represents the training data. Many existing methods~\citep{kurmanji2024towards,lin2024gdr} combine gradient ascent on an \textbf{unlearn set} -- for degrading performance on selected data,  with gradient descent on a \textbf{retain set} -- for preserving accuracy elsewhere. 

Very often however, models are released without their full training dataset, and one may only have access a small fraction of the training data to serve as a retain set. For instance, Whisper large-V3 ~\citep{radford2023robust}, an ASR foundation model, was trained on a proprietary dataset comprising over 5 million hours of labeled audio recordings. Although this private dataset cannot serve as a retain set, small-scale publicly available ASR datasets such as LibriSpeech can be used as substitutes. 
The key observation of this paper is that leading unlearning methods average over the retain set. However, when the retain set is small, one aims to go beyond averages and extract as much information as possible from the retain set.


In this work, we tackle the challenge of machine unlearning with a limited retain set. We propose a novel algorithm named \textit{\ourmethod{}}, which enables effective unlearning while minimizing the impact on the model's generalization performance. The key idea is to use the gradients over the retain set to estimate a subspace of gradients that should be maintained. This way, rather than relying heavily on the retain set to offset the negative effects of the unlearning process, our method directly mitigate interference by taking update steps that are orthogonal to the retain subspace.

To motivate our approach, we begin with a theoretical analysis under simplifying assumptions. The ideal objective of unlearning is to modify performance on the unlearn set while preserving performance on the retain set. This can be framed as an optimization problem constrained to the manifold of parameters that leave all retain-set points unaffected. We show that the gradient restricted to this manifold is equivalent to projecting the unlearning gradient onto the subspace orthogonal to the per-sample gradients of the retain batch. Inspired by this insight, we develop an algorithm that efficiently approximates the corresponding optimization trajectory. Unlike prior methods that rely on the average retain-set gradient, our approach adopts a \emph{per-sample gradient} perspective, yielding a more robust solution to unlearning (Figure~\ref{fig:grad_ortho}).


Our experiments focus on the challenging regime of small retain sets. These settings highlight the practical constraints often encountered in real-world applications of machine unlearning. We thoroughly evaluate the effectiveness of our approach, \ourmethod{}, across several challenging tasks, including image classification and automatic speech recognition. Additionally, we evaluate our approach across diverse unlearning regimes, including random data removal, class-specific forgetting, and a proxy-retain setting where the retain set is drawn from a related but distinct distribution, demonstrating versatility when the original training data are unavailable. Our results consistently show that \ourmethod{} achieves reliable unlearning while maintaining the overall model performance better than other leading unlearning methods.   

This paper makes the following contributions: (i) We propose \textit{\ourmethod{}} -- a new machine unlearning method, tailored for a limited amount of retain data. (ii) From a geometric perspective, we provide a theoretical motivation for our approach. (iii) We demonstrate the effectiveness of \ourmethod{} through extensive experiments spanning multiple datasets, modalities, and unlearning setups.



\begin{wrapfigure}{r}{0.45\textwidth}
    \vspace{-50pt}
    \begin{minipage}{\linewidth}
        \centering
        \includegraphics[width=\linewidth]{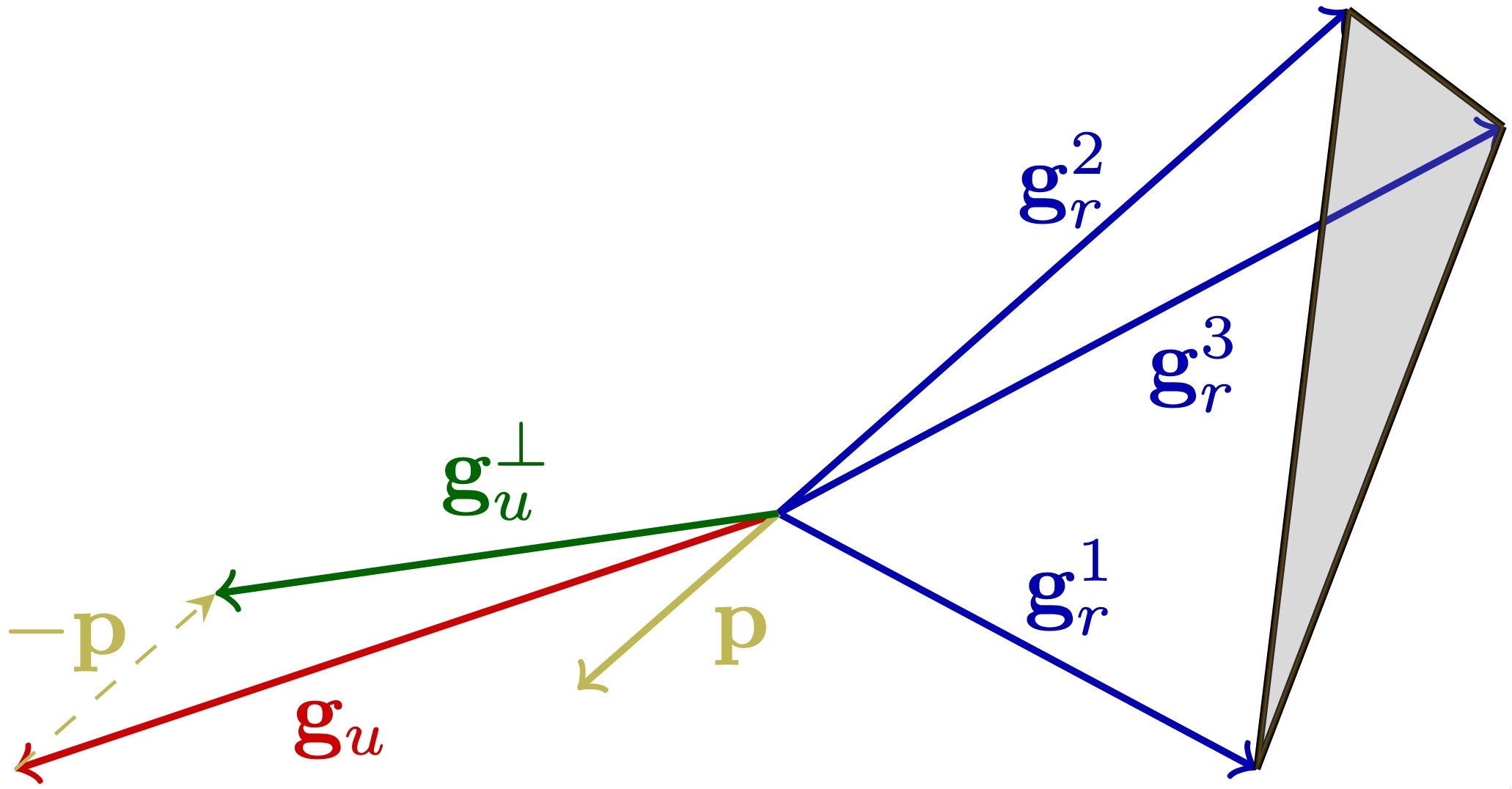}
        \caption{Illustration of the gradient orthogonalization process. The retain gradients \textcolor{blue!70!black}{$\mathbf{g}_r^1$}, \textcolor{blue!70!black}{$\mathbf{g}_r^2$}, and \textcolor{blue!70!black}{$\mathbf{g}_r^3$} span a subspace (gray triangle). The projection vector \textcolor{yellow!70!black}{$\mathbf{p}$} is obtained by applying QR decomposition on the retain gradients.
        The unlearn gradient \textcolor{red!70!black}{$\mathbf{g}_u$} is projected using \textcolor{yellow!70!black}{$\mathbf{p}$} to form unlearning gradient which is orthogonal to the retain subspace, \textcolor{green!40!black}{$\mathbf{g}_u^{\perp}$}.}
        \label{fig:grad_ortho}
    \end{minipage}
    \vspace{-10pt}
\end{wrapfigure}

\section{Related Work}
The development of efficient machine unlearning methods \citep{cao2015towards, fan2025challenging, ginart2019making, goel2022towards, zhang2024unlearncanvas, romero2007incremental, mehta2022deep, huang2025learning} has gained significant attention, addressing a range of applications across domains such as regression tasks \citep{thudi2022unrolling}, federated learning \citep{liu2021federaser, liu2022right, wang2022federated}, graph neural network \citep{chen2022graph, cheng2023gnndelete}. Retraining the model from scratch, widely regarded as the gold standard for unlearning \cite{fan2023salun}, guarantees the complete removal of data influence. However, this approach is often impractical in production environments due to the extensive computational resources, especially for large-scale datasets. Alternatively, fine-tuning a model for a new task may induce catastrophic forgetting \citep{lopez2017gradient}, but this mechanism fails to ensure the precise removal of specific data influences.

Most machine unlearning methods leverage techniques like influence functions \citep{guo2019certified, neel2021descent, wu2022puma, wu2025scissorhands, sekhari2021remember},  probabilistic approaches \citep{golatkar2020forgetting, golatkar2021mixed}. However, these methods often face inherent limitations that reduce their practical effectiveness, particularly in defending against membership inference attacks \citep{dwork2006our, graves2021amnesiac}.
As a result, the focus has shifted toward developing more effective and efficient unlearning strategies \citep{golatkar2020eternal, becker2022evaluating, jia2023model, chen2023boundary}. While these approaches represent significant advancements in machine unlearning, many rely on assumptions or techniques that limit their practicality in real-world scenarios. 

Cluster-based unlearning (DUCK~\citep{cotogni2023duck}, SCAR~\citep{bonato2024retain}), differing mainly in clustering metrics, lack adaptation to auto-regressive models, limiting applicability to sequential tasks. SCRUB~\citep{kurmanji2024towards}, a teacher–student framework, removes specific influences but struggles to generalize (e.g., forgetting random samples). GDR-GMA~\citep{lin2024gdr} relies on orthogonal projections of averaged gradients, ignoring per-sample variability and leaving residual influence. Most methods target classification; \citep{fan2023salun} highlights limitations for image generation, crucial for copyright and safety.


\paragraph{Conflicting Gradients in Multi-Task Learning:}
Multi-task learning (MTL) aims to improve model generalization by optimizing multiple related tasks~\citep{crawshaw2020multi, zhang2021survey}. However, different tasks often compete for model capacity and produce gradients pointing in opposite directions during training~\citep{yu2020gradient}. This phenomenon, known as gradient interference or conflict, occurs when a gradient that benefits one task degrades the performance of others. Addressing this issue by mitigating these conflicts has become crucial for training MTL systems, with early works focusing on analyzing conflict patterns and their relationships~\citep{sener2018multi, chen2018gradnorm}. Various optimization-based approaches have been proposed, including gradient projection and dropping to reduce task interference~\citep{chen2020just, wang2021gradient, liu2021conflict,achituve2024bayesian}. Recently, geometric and game-theory perspectives have led to methods seeking optimal Pareto solutions in the MTL optimization landscape~\citep{navon2022multi, javaloy2021rotograd}. Other studies proposed architectural solutions, including progressive networks~\citep{rusu2016progressive}, attention-based routing~\citep{ma2019snr}, and dynamic architecture adaptation~\citep{sun2020adashare}. Another line of works focuses on dynamic loss weighting, with methods like uncertainty weighting~\citep{kendall2018multi} and DWA~\citep{liu2019end} automatically balancing task losses based on pre-defined criteria. 
In the related field of continual learning, several projection-based methods constrain updates to avoid interfering with previously learned knowledge. For example, GEM/A-GEM enforce constraints using gradients on a small episodic memory of past data and project the current gradient accordingly \cite{Saha2021GradientPM,Chaudhry2018EfficientLL}, while orthogonality-based approaches explicitly encourage gradient directions that are orthogonal to protected subspaces \cite{Farajtabar2019OrthogonalGD,zeng2019continual}. These ideas closely connect to machine unlearning, which likewise requires an update that achieves forgetting while minimally degrading performance on retained data.

In this work, we focus on evaluating machine unlearning across various setups in the contexts of image classification and Automatic Speech Recognition (ASR). By exploring different unlearning scenarios, we aim to test the generalization capabilities of all methods, particularly in handling large-scale datasets and scenarios with restricted access to the original training data.

\section{Background}
We consider a training dataset $\mathcal{D} = \{(x_i, y_i)\}_{i=1}^N$ with each data point representing a pair of input vector $x_i$ and its corresponding label $y_i$. A machine learning model $f(\cdot;\theta)$, parameterized by parameters $\theta$, is optimized to minimize a loss function $\mathcal{L}(\theta)$. Formally, we define the loss function as $\mathcal{L}(\theta) = \frac{1}{N} \sum_{i=1}^N \ell(f(x_i; \theta), y_i)$, where $\ell$ is the cross-entropy loss. The model parameters trained on $\mathcal{D}$ are denoted as $\theta_p$ representing the \textit{pretrained model}. In machine unlearning, we are given two datasets: (i) Unlearn set $\mathcal{D}_u = \{(x_i, y_i)\}_{i=1}^{N_u}$, containing the $N_u$ data points to be unlearned. (ii) Retain set $\mathcal{D}_r = \{(x_i, y_i)\}_{i=1}^{N_r}$, with $N_r$ samples representing the training data to aid retain the model's performance. We assume these two datasets are disjoint, i.e., $\mathcal{D}_u \cap \mathcal{D}_r = \emptyset$. The primary goal of machine unlearning is to modify the model's weights to obtain $\theta_u$ resulting in an \textit{unlearned model} $f(\cdot;\theta_u)$. This modification process aims to remove the knowledge of the original model of $\mathcal{D}_u$. At the same time, the model must maintain its predictive performance on unseen data.

One major challenge in machine unlearning is how to define if the unlearn set was successfully unlearned. While theoretically, we want our model to be indistinguishable from a model trained from scratch without the unlearn set, this is hard to verify without actually retraining from scratch. In this work, similar to \cite{cotogni2023duck}, we aim for the performance on the unlearn set to match the original models' performance on the test set as a proxy. This also has the added benefit of making the comparison between different unlearning methods straightforward. As we care about both unlearn set performance and test set performance, by normalizing the unlearn set performance of all models (up to some tolerance), we can directly compare using a single metric, i.e., the test set performance. 

\section{Method}
We introduce \ourmethod{}, a novel machine unlearning approach designed to address unlearning with \textit{limited retain data}.  We observe that current unlearning methods 
perform
%
gradient ascent for unlearning and gradient descent for retention. Such approaches excessively depend on the retain set, because in a sense, we are simultaneously forgetting and retraining during the unlearning phase. {Such mixed objectives are known to be harder to stabilize and optimize. } Considering this, our approach aims to mitigate the negative effect of the unlearning step instead of fixing it using the retain set. {We propose to use a small retain set in a more efficient way, by computing a subspace of gradients that should not interfere with the retained data.}

\subsection{Geometric Motivation}
To motivate our method, we start by analyzing theoretically how we would perform ideal unlearning under strong simplifying assumptions. This will then guide the design of our practical algorithm.  
Intuitively, we are interested in the set of parameter vectors $\theta\in\mathbb{R}^d$ that maintains a constant loss over the retain set $\mathcal{D}_r$. Formally, let $\bar{\mathcal{L}}_r(\theta)=(\ell_1(\theta),...,\ell_{N_r}(\theta))$ define the vector of losses over the retain set, we are interested in performing unlearning in the level set $\bar{\Theta}:=\{\theta\in\mathbb{R}^d \mid \bar{\mathcal{L}}_r(\theta)=\bar{\mathcal{L}}_r(\theta_p)\}$, i.e., unlearning without changing the loss on elements of the retain set.

\begin{claim}
    Assuming that (i) The loss $\ell$ is continuously differentiable, and (ii) The Jacobian of the retain loss $\nabla\bar{\mathcal{L}}_r{\in}\mathbb{R}^{N_r\times d}$ is of full rank for all $\theta{\in}\bar{\Theta}$ then $\bar{\Theta}$ is a smooth manifold of dimension $d-N_r$
\end{claim}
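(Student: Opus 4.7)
The plan is to apply the classical regular value theorem (also known as the preimage theorem or the submersion level set theorem). The setup is exactly tailored for it: $\bar{\mathcal{L}}_r \colon \mathbb{R}^d \to \mathbb{R}^{N_r}$ is a map whose differential has full rank everywhere on the level set $\bar{\Theta} = \bar{\mathcal{L}}_r^{-1}(\bar{\mathcal{L}}_r(\theta_p))$. Once we recognize that $\bar{\mathcal{L}}_r(\theta_p)$ is a regular value, the standard theorem yields immediately that $\bar{\Theta}$ is a $C^1$ submanifold of codimension $N_r$, i.e.\ of dimension $d - N_r$.

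Concretely, I would proceed as follows. First, I would note that assumption (i) gives that $\bar{\mathcal{L}}_r$ is a $C^1$ map, since each component $\ell_i$ is continuously differentiable. Second, I would unpack assumption (ii): the Jacobian $\nabla\bar{\mathcal{L}}_r(\theta) \in \mathbb{R}^{N_r \times d}$ is of full rank at every $\theta \in \bar{\Theta}$, which, since the rank cannot exceed $N_r$, means it has rank exactly $N_r$ (in particular forcing $N_r \le d$). Thus $\bar{\mathcal{L}}_r$ is a submersion at each point of $\bar{\Theta}$, i.e.\ $\bar{\mathcal{L}}_r(\theta_p)$ is a regular value.

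Third, to make the manifold structure explicit (rather than just cite the theorem), I would localize around any fixed $\theta^* \in \bar{\Theta}$. Because $\nabla\bar{\mathcal{L}}_r(\theta^*)$ has rank $N_r$, one can choose coordinates in $\mathbb{R}^d$ splitting $\theta = (u,v)$ with $u \in \mathbb{R}^{N_r}$, $v \in \mathbb{R}^{d-N_r}$, such that the $N_r \times N_r$ block $\partial \bar{\mathcal{L}}_r / \partial u$ at $\theta^*$ is invertible. The implicit function theorem then furnishes a $C^1$ map $\varphi$, defined on a neighborhood of $v^*$, with $u = \varphi(v)$ parametrizing $\bar{\Theta}$ locally. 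This gives a $C^1$ chart from an open subset of $\mathbb{R}^{d-N_r}$ onto a neighborhood of $\theta^*$ in $\bar{\Theta}$, and carrying this out at every point of $\bar{\Theta}$ produces an atlas, proving $\bar{\Theta}$ is a $C^1$ manifold of dimension $d - N_r$.

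There is no real obstacle; this is a textbook consequence of the implicit function theorem. The only mild subtlety worth flagging is that the hypothesis is $C^1$, so "smooth" here should be read as $C^1$ (if one wants $C^\infty$ one would additionally need $\ell$ to be $C^\infty$, in which case the same argument with the $C^\infty$ implicit function theorem gives a $C^\infty$ manifold). I would also remark that the tangent space at $\theta \in \bar{\Theta}$ is exactly $\ker \nabla \bar{\mathcal{L}}_r(\theta)$, i.e.\ the orthogonal complement of the span of the per-sample retain gradients $\{\nabla\ell_i(\theta)\}_{i=1}^{N_r}$, which is precisely the subspace onto which \ourmethod{} projects $g_u$ and so connects the geometric statement directly to the algorithm.
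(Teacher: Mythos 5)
Your proposal is correct and follows essentially the same route as the paper's proof sketch, which also reduces the claim to the implicit function theorem applied to the level-set map $\theta \mapsto \bar{\mathcal{L}}_r(\theta)-\bar{\mathcal{L}}_r(\theta_p)$ under the full-rank Jacobian hypothesis. Your version simply spells out the coordinate splitting and the regular-value-theorem framing in more detail, and your caveat that ``smooth'' should be read as $C^1$ given hypothesis (i) is a fair refinement of the paper's statement.
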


\begin{proof}[Proof Sketch]
    To see that, we define for $\theta'\in\bar{\Theta}$ the function $f:\mathbb{R}^d\to\mathbb{R}^{N_r}$ by $f(\theta)=\bar{\mathcal{L}}_r(\theta)-\bar{\mathcal{L}}_r(\theta_p)$. From our assumptions, $f$ is continuously differentiable, and the Jacobian of $f$, $\nabla f= \nabla\bar{\mathcal{L}}_r$ has full rank at $\theta'$. Thus, as a direct result of the implicit function theorem, the set $\bar{\Theta}$ is locally diffeomorphic to an open ball in $\mathbb{R}^{d-N_r}$.
\end{proof}

Considering continuous parameter updates, to minimize the unlearn loss while remaining in $\bar{\Theta}$, we follow the gradient flow restricted to the manifold. This requires projecting the Euclidean gradient of our objective onto the tangent space $T_{\theta}\bar{\Theta}$, ensuring the flow stays on the manifold.

\begin{claim}
    The tangent space $T_{\theta'}\bar{\Theta}$ to $\bar{\Theta}$ at $\theta'$ is given by the null space of the Jacobian $\nabla_{\theta'}\bar{\mathcal{L}}_r$, that is, the set of directions in parameter space that are orthogonal to the subspace spanned by the retain gradients, $T_{\theta'}\bar{\Theta}=\{ v \in \mathbb{R}^d \mid \nabla\bar{\mathcal{L}}_r v =0 \}$
\end{claim}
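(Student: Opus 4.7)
The plan is to prove this as a standard consequence of the regular level set / submersion theorem, leveraging the setup and full-rank assumption already established in the previous claim. I would break the argument into a ``$\subseteq$'' direction and a ``$\supseteq$'' direction, since tangent spaces are most easily characterized via velocities of smooth curves lying in the manifold.

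For the inclusion $T_{\theta'}\bar{\Theta} \subseteq \ker(\nabla\bar{\mathcal{L}}_r(\theta'))$, I would take any tangent vector $v \in T_{\theta'}\bar{\Theta}$ and realize it as $v=\gamma'(0)$ for some smooth curve $\gamma:(-\epsilon,\epsilon)\to\bar{\Theta}$ with $\gamma(0)=\theta'$. Since $\gamma(t)\in\bar{\Theta}$ for all $t$, the defining equation gives $\bar{\mathcal{L}}_r(\gamma(t)) = \bar{\mathcal{L}}_r(\theta_p)$ identically. Differentiating at $t=0$ with the chain rule yields $\nabla\bar{\mathcal{L}}_r(\theta')\,v = 0$, which is exactly the null-space condition.

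The reverse inclusion is where the real content lies, and it is the only step requiring care. Here I would invoke the local parameterization produced by the implicit function theorem in the proof of the previous claim: near $\theta'$ there is a diffeomorphism $\varphi$ from an open ball $U\subset\mathbb{R}^{d-N_r}$ onto a neighborhood of $\theta'$ in $\bar{\Theta}$. Given any $v\in\ker(\nabla\bar{\mathcal{L}}_r(\theta'))$, the goal is to construct a curve in $\bar{\Theta}$ with initial velocity $v$; the standard trick is to show that $d\varphi_0:\mathbb{R}^{d-N_r}\to\ker(\nabla\bar{\mathcal{L}}_r(\theta'))$ is a linear isomorphism, so there exists $w$ with $d\varphi_0(w)=v$, and then $\gamma(t):=\varphi(tw)$ is a smooth curve in $\bar{\Theta}$ with $\gamma'(0)=v$. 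This identifies $v$ as a tangent vector.

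Finally, I would close the argument with a dimension check: the full-rank hypothesis (ii) from the previous claim guarantees $\dim\ker(\nabla\bar{\mathcal{L}}_r(\theta'))=d-N_r$, which matches $\dim T_{\theta'}\bar{\Theta}$ from the implicit function theorem, so the inclusion chain forces equality. The main obstacle is being precise about the surjectivity step: strictly speaking the first inclusion already gives containment of a $(d-N_r)$-dimensional subspace inside a $(d-N_r)$-dimensional subspace, so equality follows by dimension alone without explicitly constructing the curve, which makes the proof quite short once the previous claim is in hand.
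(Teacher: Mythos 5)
Your proposal is correct and follows essentially the same route as the paper: the forward inclusion via differentiating $\bar{\mathcal{L}}_r\circ\gamma$ along a curve in $\bar{\Theta}$, and then equality by comparing dimensions ($\dim T_{\theta'}\bar{\Theta}=\dim\ker(\nabla_{\theta'}\bar{\mathcal{L}}_r)=d-N_r$), which is exactly the paper's dimension-counting argument. Your extra sketch of the reverse inclusion via the local parameterization is a valid alternative but, as you yourself note, is rendered unnecessary by the dimension check.
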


\begin{proof}[Proof Sketch]
To show that $T_{\theta'}\bar{\Theta}=\text{Ker}(\nabla_{\theta'}\bar{\mathcal{L}}_r)$, we first note that $T_{\theta'}\bar{\Theta}\subset\text{Ker}(\nabla_{\theta'}\bar{\mathcal{L}}_r)$: Let $v\in T_{\theta'}\bar{\Theta}$ and let $\gamma(\cdot)$ be a smooth curve in $\bar{\Theta}$ with $\gamma(0)=\theta'$ and $\dot{\gamma}(0)=v$, we have $0=\frac{d}{dt}f(\gamma(t))\mid_{t=0}=\nabla_{\theta'}\bar{\mathcal{L}}_r v$, and so $v\in \text{Ker}(\nabla_{\theta'}\bar{\mathcal{L}}_r)$. Finally, we get the equality $T_{\theta'}\bar{\Theta}=\text{Ker}(\nabla_{\theta'}\bar{\mathcal{L}}_r)$ following a dimension counting argument since $\text{dim}(T_{\theta'}\bar{\Theta})=\text{dim}(\text{Ker}(\nabla_{\theta'}\bar{\mathcal{L}}_r))=d-N_r$.
\end{proof}

To algorithmically perform this gradient flow we would need to compute the standard gradient, project it to the space orthogonal to the gradients of the entire retain set, and then update the parameters along the exponential map, or update and then project back to the manifold. This, however, is very demanding computationally, as we need to compute and store the gradients on the entire retain set, as well as compute the exponential map or projection step.

\subsection{Practical Algorithm}
\label{sec:practical_alg}

\begin{wrapfigure}{r}{0.48\textwidth}
    \vspace{-20pt}
    \begin{minipage}{\linewidth}
        \begin{algorithm}[H]
            \small
            \caption{\textit{\ourmethod{}}}
            \label{alg:orthograd}
            \begin{algorithmic}
               \STATE {\bfseries Input:} Forget set $\mathcal{D}_u$, retain set $\mathcal{D}_r$, learning rate $\eta$, \\ combination parameter $\alpha$
               \STATE {\bfseries Output:} Updated model parameters $\theta_p$
               \STATE Apply LoRA modules to the pretrained model:\\
               $\theta_l = LoRA(\theta_p)$
               \REPEAT
               \STATE Sample a batch $\mathcal{B}_u \subset \mathcal{D}_u$ and $\mathcal{B}_r \subset \mathcal{D}_r$
               \STATE Compute the gradient $g_u$ from $\mathcal{B}_u$
               \STATE Compute the retain batch per-sample gradient matrix:\\ $G_r = [g_r^1, g_r^2, \dots, g_r^k]$ from $\mathcal{B}_r$
               \STATE Perform QR decomposition on $G_r$ to extract subspace:\\ 
               $Q = QR(G_r), \quad Q = [q_r^1, q_r^2, \dots, q_r^k]$
               \STATE Project $g_u$ onto the retain gradient subspace:\\
               $p_i = \langle g_u, q^i_r \rangle q^i_r$
               \STATE Compute the orthogonalized unlearn gradient:\\
               $g_u^\perp = g_u - \sum_{i=1}^k p_i$
               \STATE Compute the mean retain gradient:\\
               $\bar{g}_r = \frac{1}{k} \sum_{i=1}^k g_r^i$
               \STATE Combine gradients to form a unified update direction:\\
               $g = \alpha \bar{g}_r - (1-\alpha)g^\perp_u$
               \STATE Update model parameters:\\
               $\theta_l \leftarrow \theta_l - \eta g$ 
               \UNTIL{Convergence or maximum number of iterations}
               \STATE Merge LoRA modules:\\
               $\theta_p = Merge(\theta_p, \theta_l)$
            \end{algorithmic}
        \end{algorithm}
    \end{minipage}
\end{wrapfigure}

While performing the exact gradient flow on the retain set is too computationally expensive to run in practice, it inspires the design of our simple and practical unlearning algorithm \ourmethod{}. At each optimization step, we simply project the unlearn gradient to the space orthogonal to all individual gradients of the retain batch. Specifically, at each step, we sample a batch of examples from the unlearn set $\mathcal{D}_u$ and calculate the mean gradient vector on this batch. We denote this gradient vector as $g_u$. Next, we sample a batch from the retain set $\mathcal{D}_r$. For the retain batch with $k$ samples, we compute the per-sample gradient matrix $G_r = [g_r^1, g_r^2, \dots, g_r^k]$, where each column $g_r^i$ corresponds to the gradient vector for sample $i$ in the batch. 
Importantly, we note that this can be achieved efficiently using modern automatic differentiation libraries, such as PyTorch~\citep{paszke2019pytorch}, which allow us to obtain per-sample gradients in a single forward-backward pass. To ensure orthogonality between $g_u$ and the column space of $G_r$, we employ QR decomposition ~\citep{francis1961qr}  on $G_r$. This yields an orthonormal basis $Q = [q_r^1, q_r^2, \dots, q_r^k]$ that spans this subspace. Once the retain gradient subspace is defined, we project the unlearn gradient onto this subspace to compute its projection w.r.t each subspace vector. For a single retain gradient $g^i_r$, the projection is calculated as: $g_u^\perp = g_u - \sum_{i=1}^k \langle g_u, q^i_r \rangle q^i_r$. We note that while previous algorithms, for example \cite{lin2024gdr}, do try to mitigate interference between the unlearn and the retain set gradients, they achieve this on the batch-level average gradients and not on the gradients of the individual data points. We found in our experiments that the more strict per-element constraint, instead of working on the mean gradient gives a stronger performance (Section~\ref{sec:ablation}).


We now discuss two modifications of our method that we found to provide large empirical gains. First, instead of changing the entire weight space, we use low-rank adaptation (LoRA)~\citep{Hu2021LoRALA} to limit further the effect that unlearning has on the overall test performance. We note that in general parameter parameter-efficient fine-tuning (PEFT) is a rapidly evolving field, and how to utilize it for unlearning best 
have yet to be thoroughly explored.
Second, while our method is robust to retain-set size, it may underuse it; linearly combining retain and unlearn gradients improves the performance of the unlearned model compared to solely performing gradient ascent in the direction of the unlearn gradient. 
Therefore, we define the update gradient as: $g = \alpha \bar{g}_r - (1-\alpha)g^\perp_u$ where $\bar{g}_r = \frac{1}{k} \sum_{i=1}^k g_r^i$ is the retain gradient averaged over the batch, and $\alpha \in [0,1]$ is a hyperparameter that controls the trade-off between forgetting and retaining. Finally, we update the model parameters $\theta$ using the update rule: 
$\theta_l \leftarrow \theta_l - \eta g$. The step-by-step procedure is presented in Algorithm~\ref{alg:orthograd}.

In summary, \ourmethod{} enforces orthogonality between the unlearn and retain gradients, minimizing the interference between the updates of the unlearn set and retain set. \ourmethod{} is designed for low-data regimes (small retain sets), because unlike previous methods, it takes into account the subspace of gradients defined by the retain set, rather than average aggregates only. 
We demonstrate \ourmethod{} effectiveness on various datasets and model architectures in the next section.

\section{Experiments}\label{sec:exp}
We evaluate \ourmethod{} and compare it with recent machine unlearning approaches. We use several datasets, model architectures, and unlearning setups to demonstrate the effectiveness and versatility of \ourmethod{} in the regime of a limited number of retain data points. To encourage future research and reproducibility, we will make our code publicly available. Additional experimental results are presented in Appendix~\ref{app:exp_results}, including insightful analyses, ablation studies on key hyperparameters, and a detailed discussion of evaluation metrics.

\paragraph{Baselines.} 
We compare \ourmethod{} with recent machine unlearning baselines.  (1) Retrain - retraining from scratch without the unlearn set. \textit{We note that this baseline is inappropriate in the low data regime since it overfits the retain data, but we include it for completeness}. (2) Finetune - finetune the pretrained model solely with the retain set. (3) NegGrad~\citep{graves2021amnesiac, thudi2022unrolling} - a naive approach that performs gradient ascent steps on the unlearn set. (4) NegGrad+~\citep{kurmanji2024towards} - NegGrad with the additional goal of minimizing retain loss and preserving the model's knowledge on the retain dataset. (5) FISHER~\citep{golatkar2020eternal} - adds additive noise to the pretrained weights with a constraint on the fisher information matrix. (6) Influence~\citep{koh2017understanding, izzo2021approximate} - utilizes influence functions to identify the parameters most critical to the data being unlearned and perturb them by adding additive noise. (7) SCRUB~\citep{kurmanji2024towards} - A knowledge distillation approach that incorporates a regularization term into the unlearning objective. (8) DUCK~\citep{cotogni2023duck} - uses metric learning to minimize the distance between feature vectors of the data to be forgotten and the nearest centroid of a different class. (9) SCAR~\citep{bonato2024retain} - similar to DUCK, it uses Mahalanobis distance as the objective to minimize. (10) SSD~\citep{foster2024fast} - uses Fisher information to identify parameters tied to the forget set and selectively dampens them. We note that SCAR and SSD rely less on the retain set.
(11) GDR-GMA~\citep{lin2024gdr} - projecting conflicting gradients onto an orthonormal plane and dynamically adjusting the magnitude of update gradients.

\paragraph{Evaluation.}
We report the two common evaluation metrics in the field: (1) unlearning accuracy ($\mathcal{A}_{u}$) on the data to be forgotten, and (2) test accuracy ($\mathcal{A}_{test}$) on the held-out test set. For completeness, we also report retain accuracy ($\mathcal{A}_{r}$) on the retain data. This is comparable to train accuracy in standard learning and should not be used for comparison. In all experiments, we perform early stopping based on $\mathcal{A}_{u}$ reaching a specific target (normally the original test accuracy). This is easier to compare because the main difference is in $\mathcal{A}_{test}$.  Stopping criteria are crucial in machine unlearning to ensure the process reaches a proper balance between effective forgetting with retained functionality.

As machine unlearning involves multiple objectives, we propose the following \textit{Unlearning Impact Score} (UIS) for easier comparison.  Our metric is defined as: $$UIS= \left( \frac{|\mathcal{A}^p_{test} - \mathcal{A}^u_{test}|}{\mathcal{A}^p_{test}} + \frac{|\mathcal{A}^p_{test} - \mathcal{A}^u_{u}|}{\mathcal{A}^p_{test}} \right) / \,2\quad,$$ where the up scripts $p$ and $u$ denote pretrained and unlearned models respectively. 
In UIS we average two components: the relative change in test accuracy, and how close the performance on the unlearning set is to its target, $\mathcal{A}^p_{test}$. A lower UIS score indicates better unlearning, as it suggests the model has successfully forgotten the unlearn data while maintaining its performance on held-out data. Additional results with the MIA metric are in the appendix.


\begin{table}[b]
\centering
    \caption{\textit{Ablation Study.} Evaluation of \ourmethod{} variants on ASR unlearning. Values are word-error-rates averaged over 5 different speakers. }
    \label{tab:ablation}
    \resizebox{0.9\linewidth}{!}{
    \begin{tabular}{l cccc}
    \toprule
    & $\mathcal{W}_{retain}$ & $\mathcal{W}_{unlearn}$ & $\mathcal{W}_{speaker}$ & $\mathcal{W}_{test}$ \\
    \midrule
    \ourmethod{} Mean & $27.23 \pm 11.36$ & $96.67 \pm 6.02$ & $64.25 \pm 35.48$ & $29.42 \pm 14.07$ \\
    \ourmethod{} Per-sample & $18.71 \pm 4.04$ & $100.00 \pm 0.00$ & $96.40 \pm 7.04$ & $26.87 \pm 0.60$ \\
    \ourmethod{} Mean + Lora & $23.77 \pm 9.62$ & $92.12 \pm 7.34$ & $63.27 \pm 35.43$ & $41.21 \pm 25.67$ \\
    \ourmethod{} Per-sample + Lora & $12.73 \pm 1.43$ & $98.30 \pm 2.50$ & $81.16 \pm 23.97$ & $\underline{16.36} \pm \underline{0.32}$ \\
    \midrule
    \ourmethod{} & $12.11 \pm 0.65$ & $96.24 \pm 8.06$ & $98.53 \pm 3.28$ & $\textbf{13.98} \pm \textbf{0.58}$ \\
    \bottomrule
    \end{tabular}
    }
\end{table}

\subsection{Automatic Speech Recognition} \label{sec:asr}
Automatic Speech Recognition (ASR) is the process of converting spoken language into written text, a fundamental component in many real-world applications~\citep{malik2021automatic, alharbi2021automatic}. ASR foundation models like Whisper~\citep{radford2023robust} are trained on extensive datasets of transcribed web audio containing many hours of speech recordings. These models may inadvertently retain sensitive or proprietary information. Furthermore, individuals may request that an ASR system cannot accurately transcribe their voice as a way to preserve their privacy and identity.

\textbf{Speaker unlearning.} We focus on the task of forgetting audio data associated with a particular speaker, using 
Whisper-Tiny~\citep{radford2023robust} architecture and  LibriSpeech~\citep{panayotov2015librispeech} dataset, containing $1$K hours of English speech recordings. We establish the unlearning setup by selecting a single speaker from the training set to serve as the unlearn set. We randomly allocate $10\%$ from the unlearn set to evaluate our model on the unlearned speaker. Additionally, we randomly sample $10\%$ of the remaining training set to form the retain set. The test set is taken directly from the original LibriSpeech dataset.

\textbf{Eval metrics.} We evaluate performance using word error rate (WER), a standard metric that measures the percentage of words incorrectly transcribed by the model. We report WER for 4 sets: unlearn ($\mathcal{W}_{unlearn}$), retain ($\mathcal{W}_{retain}$), test ($\mathcal{W}_{test}$), and speaker held out ($\mathcal{W}_{speaker}$). The speaker held-out dataset comprises of unseen audio recordings of the unlearned speaker. Since Whisper tends to hallucinate~\citep{koenecke2024careless} by predicting unwanted words, we clip the WER at a maximum of $100\%$.

\subsubsection{Ablation Study}
\label{sec:ablation}
We begin with an ablation study to evaluate the relative contribution of each component in our approach. We run the unlearning process for $30$ epochs with an early stopping when $\mathcal{W}_{unlearn}$ reaches $75\%$. WER tends to jump significantly during the last epochs, which can lead to a final WER that is much higher than our stopping criteria. Although the exact threshold is somewhat arbitrary, we observed a rapid increase in WER beyond a certain point. 
We illustrate this behavior empirically in Appendix~\ref{app:wer-threshold}, which plots $\mathcal{W}_{unlearn}$ across epochs and shows a clear late-stage jump. Consequently, the metric typically crosses reasonable thresholds within a single step, making the stopping choice relatively insensitive.

In this experiment, we compare 5 variants. (i) \textit{OrthoGrad Mean}; Projecting the unlearn gradient to be orthogonal to the average retain gradient, (ii) \textit{OrthoGrad Per-sample}; Projecting the unlearn gradient to the space orthogonal to all individual sample gradients, (iii) \textit{OrthoGrad Mean/Per-sample+LoRA}; The latter methods when the update is restricted to low-rank adapters. (iv) \textit{OrthoGrad}; Our full method that combines gradient descent on the retain set. 

Table~\ref{tab:ablation} shows the results. Per-sample orthogonalization has two benefits. It reduces the mean Word-Error-Rate $\mathcal{W}_{test}$ and also reduces its variance by an order of magnitude. We observed that  OrthoGrad Mean is very unstable: it may work well with some speakers but performs poorly on others. As seen in Table~\ref{tab:ablation}, restricting per-sample unlearning of OrthoGrad to LoRA adapters improves $\mathcal{W}_{test}$ significantly. However, this is not the case for OrthoGrad Mean due to instability. We note that all methods passed the $75\%$ WER  threshold on the unlearn set with a large margin, but the OrthoGrad Mean performance on unseen audio from the speaker, $\mathcal{W}_{speaker}$, was below the target threshold. This means the unlearning did not generalize well to new recordings of the unlearned speaker. Finally, we see that adding the retain gradient can offer an additional improvement, but this improvement is somewhat limited. 

\begin{table*}[b]
\centering
\caption{\textit{Automatic Speech Recognition.} ASR speaker unlearning results on the LibriSpeech dataset. Values are word-error-rates averaged over $5$ different speakers.}
\label{tab:asr}
\resizebox{0.85\linewidth}{!}{
\begin{tabular}{lccccc}
\toprule

Method & $\mathcal{W}_{retain}$ & $\mathcal{W}_{unlearn}$ & $\mathcal{W}_{speaker}$ & $\mathcal{W}_{test}$ \\
\midrule
Original & $9.99 \pm 0.15$ & $11.12 \pm 4.91$ & $10.06 \pm 6.39$ & $11.08 \pm 0.00$ \\
Finetune & $0.06 \pm 0.01$ & $13.39 \pm 5.26$ & $12.54 \pm 7.48$ & $13.67 \pm 0.04$ \\
\midrule
NegGrad+ & $72.87 \pm 19.18$ & $77.08 \pm 35.99$ & $\underline{94.89} \pm \underline{6.78}$ & $85.90 \pm 10.72$ \\
SCRUB & $100.00 \pm 0.00$ & $100.00 \pm 0.00$ & $100.00 \pm 0.00$ & $100.00 \pm 0.00$ \\
GDR-GMA & $17.38 \pm 9.69$ & $93.28 \pm 7.58$ & $94.76 \pm 6.23$ & $\underline{32.52} \pm \underline{5.72}$ \\
\midrule
\ourmethod{} & $12.11 \pm 0.65$ & $96.24 \pm 8.06$ & $\textbf{98.53} \pm \textbf{3.28}$ & $\textbf{13.98} \pm \textbf{0.58}$ \\
\bottomrule
\end{tabular}
}
\end{table*}

\subsubsection{ASR Speaker Unlearning Results}
\label{sec:asr_results}
For speaker unlearning, we compare \ourmethod{} to SCRUB~\cite{kurmanji2024towards}, GDR-GMA~\cite{lin2024gdr}, and NegGrad+~\cite{kurmanji2024towards}. 
We exclude metric learning methods (DUCK and SCAR) as they are designed for classification and are unsuitable for ASR. Also, SSD relies on trained parameters, making it unsuitable for optimizing LoRA in this unlearning setup. See technical details and hyperparameter selection in Appendix~\ref{app:asr}.

The results are shown in Table~\ref{tab:asr}. All methods, except for the finetune baseline, successfully unlearned the target speaker. We hypothesize that the high $\mathcal{W}_{test}$ values for both NegGrad+ and SCRUB arise from the fact that they do not take into account the conflict between the unlearn and retain gradients. In contrast, \ourmethod{} and GDR-GMA, which consider this conflict, perform well on this benchmark. \ourmethod{} significantly outperforms GDR-GMA, on test WER. 

\begin{table*}[t]
\caption{\textit{Proxy-Retain with ResNet18 architecture.} Performance is measured under two unlearning scenarios: random sampling of training data (3-seed average) and class removal (3-class average).}
\label{tab:proxy-retain}
\setlength{\tabcolsep}{4pt}
\small
\resizebox{\textwidth}{!}{
\begin{tabular}{lccccccccc}
\toprule
& \multicolumn{4}{c}{Random Sampling} && \multicolumn{4}{c}{Class Forgetting} \\
\cmidrule{2-5} \cmidrule{7-10}
Method & $\mathcal{A}_u$ & $\mathcal{A}_r$ & $\mathcal{A}_{test}$ & UIS ($\downarrow$) && $\mathcal{A}_u$ & $\mathcal{A}_r$ & $\mathcal{A}_{test}$ & UIS ($\downarrow$) \\
\midrule
Original & $96.10 \pm 0.28$ & $49.45 \pm 1.09$ & $81.97 \pm 0.00$ & -- && $97.31 \pm 1.21$ & $48.55 \pm 1.71$ & $81.97 \pm 0.00$ & -- \\
Retrain & $29.95 \pm 3.11$ & $99.70 \pm 0.43$ & $30.48 \pm 2.65$ & -- && $0.00 \pm 0.00$ & $99.94 \pm 0.09$ & $28.18 \pm 1.94$ & -- \\
FT & $78.61 \pm 1.38$ & $72.30 \pm 0.56$ & $67.78 \pm 1.17$ & -- && $29.23 \pm 14.15$ & $99.95 \pm 0.05$ & $64.26 \pm 0.45$ & -- \\
\midrule
NegGrad & $43.63 \pm 32.62$ & $24.30 \pm 13.96$ & $37.24 \pm 26.17$ & $0.507 \pm 0.359$ && $0.27 \pm 0.46$ & $20.43 \pm 10.87$ & $29.41 \pm 21.75$ & $0.322 \pm 0.130$ \\
NegGrad+ & $21.43 \pm 2.64$ & $28.36 \pm 1.67$ & $19.89 \pm 1.95$ & $0.748 \pm 0.028$ && $0.76 \pm 0.45$ & $42.07 \pm 4.62$ & $51.15 \pm 13.24$ & $0.193 \pm 0.080$ \\
FISHER & $10.53 \pm 0.60$ & $10.28 \pm 0.34$ & $10.18 \pm 0.30$ & $0.874 \pm 0.005$ && $71.15 \pm 32.00$ & $39.01 \pm 2.03$ & $63.81 \pm 0.49$ & $0.545 \pm 0.196$ \\
Influence & $10.22 \pm 0.55$ & $10.09 \pm 0.10$ & $10.00 \pm 0.00$ & $0.877 \pm 0.003$ && $72.01 \pm 32.84$ & $41.43 \pm 5.55$ & $73.97 \pm 7.63$ & $0.488 \pm 0.154$ \\
SCRUB & $40.21 \pm 6.51$ & $42.56 \pm 5.88$ & $38.63 \pm 4.30$ & $0.519 \pm 0.066$ && $1.20 \pm 1.19$ & $64.81 \pm 1.02$ & $52.36 \pm 2.05$ & $0.188 \pm 0.019$ \\
DUCK & $53.22 \pm 5.82$ & $99.47 \pm 0.18$ & $46.43 \pm 4.01$ & $0.392 \pm 0.060$ && $0.00 \pm 0.00$ & $42.53 \pm 2.76$ & $22.94 \pm 6.53$ & $0.360 \pm 0.040$ \\
GDR-GMA & $79.93 \pm 0.70$ & $93.97 \pm 0.89$ & $67.02 \pm 0.45$ & $\underline{0.104} \pm \underline{0.007}$ && $0.00 \pm 0.00$ & $51.74 \pm 7.53$ & $56.69 \pm 6.11$ & $0.154 \pm 0.037$ \\
SSD & - & - & - & - && $96.59 \pm 2.14$ & $48.12 \pm 0.95$ & $81.10 \pm 1.57$ & $0.595 \pm 0.005$ \\
SCAR & $78.05 \pm 4.29$ & $42.79 \pm 2.29$ & $68.32 \pm 3.60$ & $0.107 \pm 0.048$ && $0.09 \pm 0.09$ & $48.74 \pm 0.80$ & $64.44 \pm 6.22$ & $\underline{0.107} \pm \underline{0.037}$ \\
\midrule
\ourmethod{} & $80.99 \pm 1.21$ & $61.69 \pm 0.71$ & $68.41 \pm 0.79$ & $\textbf{0.089} \pm \textbf{0.012}$ && $0.46 \pm 0.42$ & $59.58 \pm 1.36$ & $68.94 \pm 3.81$ & $\textbf{0.082} \pm \textbf{0.021}$ \\
\bottomrule
\end{tabular}
}
\end{table*}

\subsection{Unlearning with Proxy Data}
In practice, the original training data are usually unavailable, especially for foundation models trained on copyrighted or proprietary data. As a result, practitioners who wish to perform unlearning must curate a small proxy retain set that approximates the original data distribution. To simulate this scenario, we evaluate \ourmethod{} in a proxy-retain setting using CINIC-10~\citep{darlow2018cinic}, which merges CIFAR-10 with resized ImageNet images from the same classes. We first train a ResNet-18 on CIFAR-10. During unlearning, we construct the retain set exclusively from the ImageNet-derived portion of CINIC-10, uniformly sampling 10\% of this pool. We use CIFAR-10 examples as both the forget set and the test set. This protocol enforces a distribution shift between retain and forget/test while preventing any retain-set leakage from CIFAR-10. Results are reported in Table~\ref{tab:proxy-retain}. Additionally, we visualize the unlearning (\(A_u\)) and generalization (\(A_{\text{test}}\)) trade off in Figure~\ref{fig:test_unlearn_tradeoff}.

\ourmethod{} achieves the lowest UIS in both random-sampling and class-forgetting, lowering \(A_u\) while keeping \(A_{\text{test}}\) near the pretrained model despite the proxy (distribution-shifted) retain set. In contrast, baselines leave residual memorization (high \(A_u\)) or cause large drops in \(A_r\) or \(A_{\text{test}}\). Other methods effectively fail to unlearn; for example, SSD did not achieve any unlearning in the random-forgetting setup, even after hyperparameter tuning. These results suggest that orthogonalizing updates to the retain gradient subspace provides better unlearning with scarce proxy data.

\subsection{Image Classification}
\label{sec:image_classification}
Image classification tasks are commonly used benchmarks for evaluating machine unlearning algorithms. These benchmarks have two variations: class-wise forgetting and random data forgetting. Class-wise forgetting focuses on removing the influence of an entire image class, while random data forgetting targets the removal of randomly selected data points from the training set. In the standard experimental setup, the entire training set, except for the unlearn set, is used as the retain set. We, however, are interested in the scenario where we have access to a limited retain set, and therefore subsample a portion of the training set to be our retain set. 



Our evaluation is conducted on the ImageNet \citep{deng2009imagenet} image classification dataset on both random sampling and class forgetting benchmarks. In the random unlearning setting, the unlearn set consists of 5K images sampled uniformly from the training data. In the class unlearning setting, the unlearn set comprises all training images belonging to the unlearn class. In both setups, we draw 10K images for the retain set and evaluate on the original test set. We use ResNet-18 \citep{he2016deep} and ViT \citep{dosovitskiy2020image} as our base classifiers. The stopping criteria used in the random forgetting experiments follow \cite{cotogni2023duck}, i.e., we stop when the unlearn accuracy is within a defined threshold (0.5\%) or lower than the test accuracy of the pretrained model. For class-wise forgetting, we stopped when the model's accuracy on the unlearned classes dropped below 1\%, indicating that the class had been effectively forgotten. In both cases, if the unlearning algorithm does not reach the target within a specific number of epochs, we report the results of the last epoch. 

\begin{table*}[h]
\centering
\caption{\textit{ImageNet using ResNet18.} Results are shown for two scenarios: random sampling (averaged over 3 seeds) and class forgetting (averaged over 3 classes).}
\label{tab:imagenet_resnet}
\setlength{\tabcolsep}{4pt}  
\small
\resizebox{\textwidth}{!}{
\begin{tabular}{lccccccccc}
\toprule
& \multicolumn{4}{c}{Random Sampling} && \multicolumn{4}{c}{Class Forgetting} \\
\cmidrule{2-5} \cmidrule{7-10}
Method & $\mathcal{A}_u$ & $\mathcal{A}_r$ & $\mathcal{A}_{test}$ & UIS ($\downarrow$) && $\mathcal{A}_u$ & $\mathcal{A}_r$ & $\mathcal{A}_{test}$ & UIS ($\downarrow$) \\
\midrule
Original & $79.04 \pm 0.68$ & $79.2 \pm 0.49$ & $69.76 \pm 0.00$ & $-$ && $91.76 \pm 4.79$ & $79.15 \pm 0.00$ & $69.76 \pm 0.00$ & $-$ \\
Retrain & $5.72 \pm 0.16$ & $95.90 \pm 0.32$ & $5.72 \pm 0.35$ & $-$ && $0.00 \pm 0.00$ & $84.60 \pm 1.35$ & $5.34 \pm 0.14$ & $-$ \\
FT & $76.81 \pm 0.83$ & $94.59 \pm 0.08$ & $67.85 \pm 0.03$ & $-$ && $78.02 \pm 18.48$ & $94.28 \pm 0.03$ & $67.39 \pm 0.03$ & $-$ \\
\midrule
NegGrad & $69.94 \pm 0.14$ & $70.43 \pm 1.52$ & $62.5 \pm 0.70$ & $0.053 \pm 0.003$ && $22.53 \pm 16.39$ & $74.81 \pm 2.09$ & $66.11 \pm 2.12$ & $0.187 \pm 0.129$ \\
NegGrad+ & $78.98 \pm 0.56$ & $79.23 \pm 0.37$ & $69.74 \pm 0.00$ & $0.066 \pm 0.004$ && $28.92 \pm 21.62$ & $75.94 \pm 1.72$ & $67.1 \pm 1.78$ & $0.226 \pm 0.177$ \\
FISHER & $78.85 \pm 0.70$ & $78.96 \pm 0.44$ & $69.59 \pm 0.09$ & $0.066 \pm 0.005$ && $91.64 \pm 4.87$ & $79.03 \pm 0.00$ & $69.64 \pm 0.00$ & $0.657 \pm 0.034$ \\
Influence & $78.98 \pm 0.68$ & $79.22 \pm 0.45$ & $69.74 \pm 0.02$ & $0.066 \pm 0.004$ && $78.82 \pm 18.09$ & $78.90 \pm 0.17$ & $69.52 \pm 0.12$ & $0.566 \pm 0.128$ \\
SCRUB & $78.86 \pm 0.70$ & $84.03 \pm 0.48$ & $69.47 \pm 0.11$ & $0.067 \pm 0.006$ && $0.53 \pm 0.55$ & $83.61 \pm 0.01$ & $69.42 \pm 0.09$ & $\underline{0.006} \pm \underline{0.004}$ \\
DUCK & $67.22 \pm 1.29$ & $99.88 \pm 0.00$ & $61.87 \pm 0.01$ & $0.074 \pm 0.011$ && $0.00 \pm 0.00$ & $99.94 \pm 0.00$ & $61.37 \pm 0.74$ & $0.060 \pm 0.006$ \\
GDR-GMA & $67.35 \pm 1.32$ & $99.97 \pm 0.02$ & $66.98 \pm 0.38$ & $\underline{0.037} \pm \underline{0.011}$ && $0.74 \pm 0.29$ & $99.22 \pm 0.22$ & $64.74 \pm 0.38$ & $0.043 \pm 0.000$ \\
SSD & $76.28 \pm 1.61$ & $76.17 \pm 0.85$ & $67.14 \pm 0.90$ & $0.065 \pm 0.005$ && $0.00 \pm 0.00$ & $78.65 \pm 0.31$ & $69.14 \pm 0.23$ & $\textbf{0.004} \pm \textbf{0.001}$ \\
SCAR & $64.41 \pm 8.89$ & $75.51 \pm 3.52$ & $61.20 \pm 6.33$ & $0.100 \pm 0.109$ && $0.00 \pm 0.00$ & $76.47 \pm 0.40$ & $65.17 \pm 0.65$ & $0.033 \pm 0.005$ \\
\midrule
\ourmethod{} & $69.95 \pm 0.15$ & $82.27 \pm 0.87$ & $67.59 \pm 0.48$ & $\textbf{0.016} \pm \textbf{0.002}$ && $0.48 \pm 0.40$ & $77.24 \pm 1.22$ & $67.25 \pm 1.24$ & $0.021 \pm 0.005$ \\
\bottomrule
\end{tabular}
}
\end{table*}

The results are presented in Tables \ref{tab:imagenet_resnet}, \ref{tab:imagenet_vit}. These experiments show that our method consistently meets the unlearning target and achieves superior performance in most settings, particularly under random sampling. \ourmethod{} also generalizes effectively across different scenarios, performing competitively in class-forgetting tasks. In contrast, several baselines, such as SSD, SCRUB, and SCAR, lack consistency, performing well in one setup but poorly in another. Additionally, SCAR and DUCK are designed specifically for image classification (see Section~\ref{sec:asr}), and SCAR is only practical when a moderate amount of retain data is available (see Section~\ref{sec:robustness}). In conclusion, \ourmethod{} delivers the most robust performance in both unlearning setups while being task-agnostic.

\begin{table*}[h]
\caption{\textit{ImageNet with ViT architecture.} Performance is measured under two unlearning scenarios: random sampling of training data (3-seed average) and class removal (3-class average).}
\label{tab:imagenet_vit}
\setlength{\tabcolsep}{4pt}  
\small
\resizebox{\textwidth}{!}{
\begin{tabular}{lccccccccc}
\toprule
& \multicolumn{4}{c}{Random Sampling} && \multicolumn{4}{c}{Class Forgetting} \\
\cmidrule{2-5} \cmidrule{7-10}
Method & $\mathcal{A}_u$ & $\mathcal{A}_r$ & $\mathcal{A}_{test}$ & UIS ($\downarrow$) && $\mathcal{A}_u$ & $\mathcal{A}_r$ & $\mathcal{A}_{test}$ & UIS ($\downarrow$) \\
\midrule
Original & $94.26 \pm 0.19$ & $94.04 \pm 0.24$ & $81.06 \pm 0.00$ & $-$ && $98.1 \pm 1.62$ & $94.17 \pm 0.00$ & $81.06 \pm 0.00$ & $-$ \\
Retrain & $3.36 \pm 0.04$ & $98.72 \pm 0.06$ & $3.42 \pm 0.04$ & $-$ && $0.00 \pm 0.00$ & $95.08 \pm 0.15$ & $3.29 \pm 0.16$ & $-$ \\
FT & $89.96 \pm 0.56$ & $99.94 \pm 0.01$ & $75.03 \pm 0.30$ & $-$ & & $91.97 \pm 8.18$ & $99.92 \pm 0.02$ & $74.07 \pm 0.05$ & $-$ \\
\midrule
NegGrad & $24.2 \pm 13.23$ & $24.26 \pm 13.46$ & $21.06 \pm 14.06$ & $0.720 \pm 0.186$ && $0.41 \pm 0.34$ & $89.63 \pm 3.84$ & $76.53 \pm 4.16$ & $0.030 \pm 0.028$ \\
NegGrad+ & $78.75 \pm 3.09$ & $87.63 \pm 5.34$ & $72.16 \pm 4.53$ & $0.069 \pm 0.051$ && $0.05 \pm 0.07$ & $90.67 \pm 4.22$ & $77.17 \pm 4.73$ & $0.024 \pm 0.028$ \\
FISHER & $94.36 \pm 0.09$ & $94.01 \pm 0.19$ & $81.00 \pm 0.05$ & $0.082 \pm 0.000$ && $98.02 \pm 1.82$ & $94.14 \pm 0.00$ & $80.99 \pm 0.00$ & $0.605 \pm 0.011$ \\
Influence & $89.08 \pm 1.90$ & $91.61 \pm 1.78$ & $77.48 \pm 1.68$ & $0.071 \pm 0.001$ && $15.28 \pm 20.76$ & $91.54 \pm 2.21$ & $78.13 \pm 2.06$ & $0.112 \pm 0.121$ \\
SCRUB & $94.54 \pm 0.15$ & $96.00 \pm 0.11$ & $80.76 \pm 0.05$ & $0.084 \pm 0.001$ && $45.82 \pm 37.8$ & $96.18 \pm 2.91$ & $75.81 \pm 4.13$ & $0.314 \pm 0.286$ \\
DUCK & $76.75 \pm 0.47$ & $100.00 \pm 0.00$ & $69.45 \pm 0.49$ & $0.097 \pm 0.000$ && $0.00 \pm 0.00$ & $99.39 \pm 0.02$ & $70.68 \pm 0.29$ & $0.064 \pm 0.001$ \\
GDR-GMA & $80.41 \pm 0.34$ & $99.34 \pm 0.05$ & $75.47 \pm 0.16$ & $\underline{0.038} \pm \underline{0.003}$ && $0.15 \pm 0.16$ & $97.56 \pm 0.18$ & $77.03 \pm 0.28$ & $0.025 \pm 0.001$ \\
SSD & $93.5 \pm 0.19$  & $93.38 \pm 0.29$ & $80.35 \pm 0.17$ & $0.081 \pm 0.000$ && $0.00 \pm 0.00$ & $94.22 \pm 0.03$ & $80.9 \pm 0.03$  & $\textbf{0.001} \pm \textbf{0.000}$ \\
SCAR & $71.49 \pm 1.34$ & $93.53 \pm 0.17$ & $77.83 \pm 0.15$ & $0.079 \pm 0.009$ && $0.00 \pm 0.00$ & $93.26 \pm 0.14$ & $76.85 \pm 0.24$ & $0.026 \pm 0.001$ \\
\midrule
\ourmethod{} & $81.04 \pm 0.13$ & $97.59 \pm 0.11$ & $78.22 \pm 0.13$ & $\textbf{0.018} \pm \textbf{0.001}$ && $0.10 \pm 0.14$ & $94.26 \pm 0.04$ & $80.73 \pm 0.05$ & $\underline{0.002} \pm \underline{0.000}$ \\
\bottomrule
\end{tabular}
}
\end{table*}




\subsubsection{Robustness to retain size}
\label{sec:robustness}
Here, we assess the robustness of our method to variations in the retain set size. To do so, we revisit the random sampling image classification setup from Section~\ref{sec:image_classification}. Specifically, we experiment with varying retain dataset sizes, ranging from $1$K to $200$K samples, reporting the UIS values. The results are presented in Figure~\ref{fig:retain_ratio}. We note that NegGrad is not affected by the size of the retain set since it only performs gradient ascent in the direction of the unlearn set. Additionally, we exclude SCAR from this experiment, as it involves inverting a covariance matrix, which results in a non-invertible matrix in the extreme case of \(1\)K samples, and leads to memory overflow for \(150\)K and \(200\)K samples.
Our model consistently outperforms baseline methods across all retain set sizes.

\begin{figure}[h]
    \centering
    \begin{minipage}[b]{0.5\textwidth}  
        \centering
        \includegraphics[width=\linewidth]{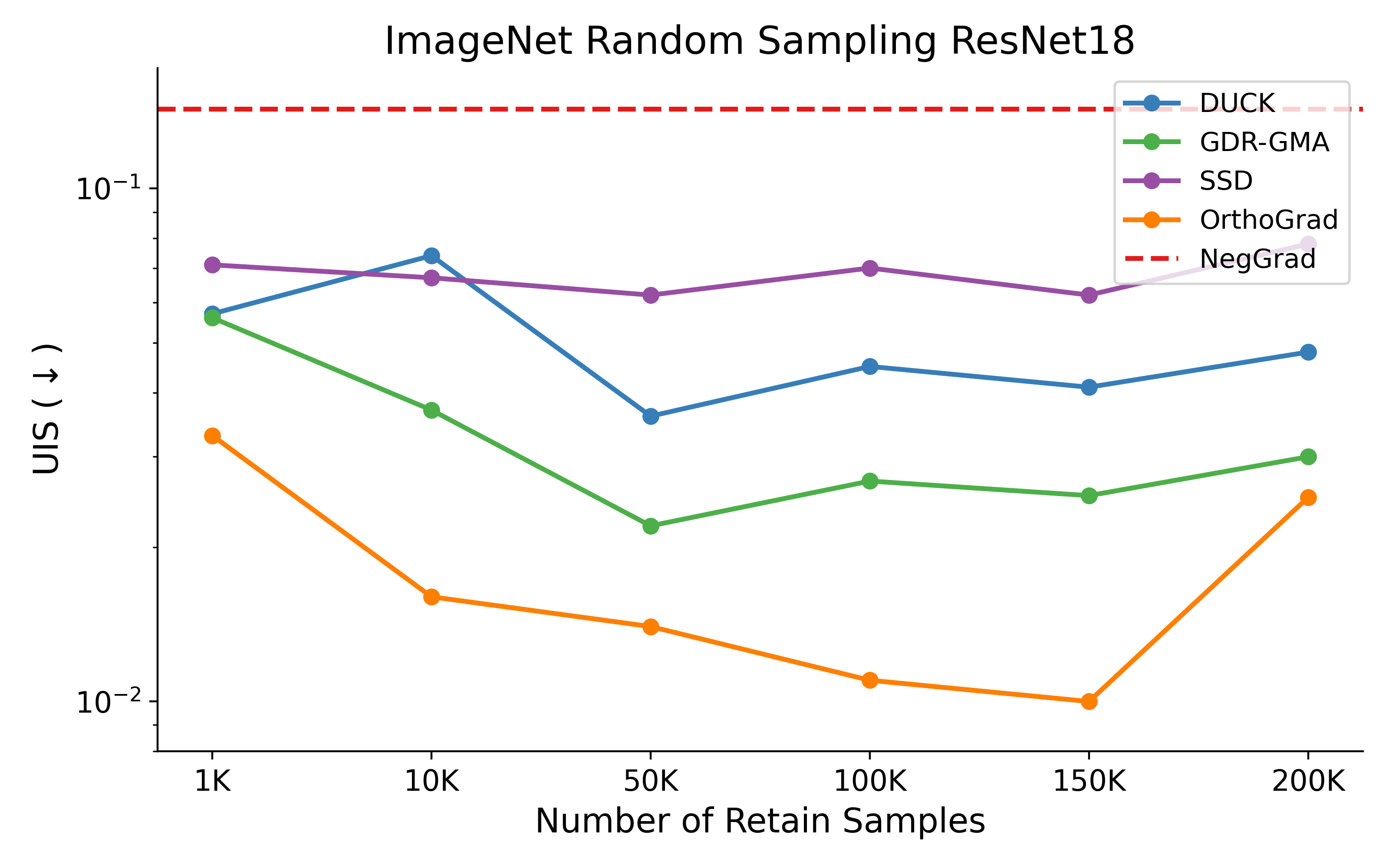}
    \end{minipage}%
    \hfill
    \begin{minipage}[b]{0.5\textwidth}
        \centering
        \includegraphics[width=\linewidth]{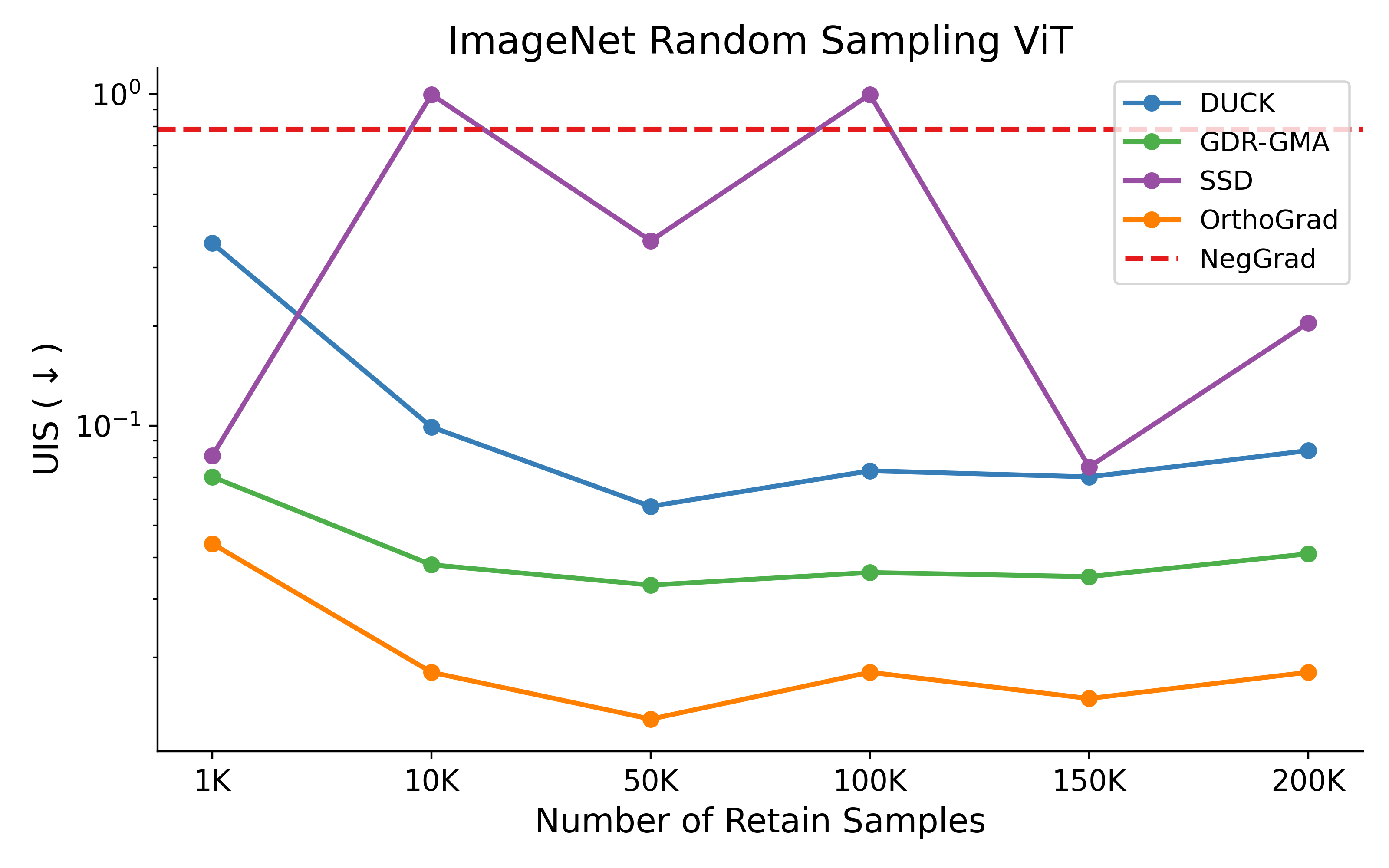}  
    \end{minipage}
    \caption{\textit{Varying retain samples.} We report UIS values across varying numbers of retained samples.}
    \label{fig:retain_ratio}
\end{figure}

\section{Runtime limitation}
In this work, we focus on the low-data regime, where \ourmethod{} yields consistent improvements over prior baselines. At the same time, \ourmethod{} relies on per-sample gradients for the retain batch, which can increase GPU memory consumption and introduce additional compute overhead compared to methods that only use averaged gradients. To that end, we revisit the ImageNet ViT experiment from Section~\ref{sec:image_classification} and report end-to-end wall-clock times for both class forgetting and random forgetting in Table~\ref{tab:vit_runtime}. Overall, \ourmethod{} remains in the same order of magnitude as other iterative gradient-based approaches (e.g., GDR-GMA), while being substantially faster than clustering-based methods (DUCK/SCAR) in this setup. Despite these considerations, \ourmethod{} presents a compelling solution for data-constrained unlearning settings.

\begin{table}[h]
\centering
\caption{\textit{ImageNet with ViT architecture.} Wall-clock runtime (seconds) for random sampling and class forgetting.}
\label{tab:vit_runtime}
\normalsize
\begin{tabular}{lcc}
\toprule
\textbf{Method} & \textbf{Random (sec)} & \textbf{Class (sec)} \\
\midrule
NegGrad    & 1,531  & 1,201  \\
NegGrad+   & 2,112  & 594    \\
FISHER     & 44,467 & 23,366 \\
Influence  & 778    & 743    \\
SCRUB      & 927    & 819    \\
DUCK       & 6,302  & 3,916  \\
GDR-GMA    & 1,105  & 654    \\
SSD        & 623    & 643    \\
SCAR       & 11,829 & 4,917  \\
\midrule
\ourmethod{}  & 2,657  & 1,239  \\
\bottomrule
\end{tabular}
\end{table}

\subsection{\ourmethod{} FLOPs Analysis}
To understand the computational profile of \ourmethod{}, we report the per-step compute cost in TFLOPs for the ImageNet with ViT-B16 architecture experiment (Table~\ref{tab:orthograd_flops}). Without LoRA, \ourmethod{} requires $29.84$ TFLOPs per step, while LoRA reduces this to $17.72$ TFLOPs, a $1.68\times$ reduction (40\% savings). The savings come from two sources: (1) the backward pass only computes gradients for LoRA adapters (884K vs.\ 86M parameters), reducing backward FLOPs by $\sim$2$\times$, and (2) the QR decomposition operates on smaller gradient matrices, dropping from $2.86$ to $0.03$ TFLOPs ($\sim$98$\times$ reduction) since its complexity is $\mathcal{O}(K \cdot B^2)$ where $K$ is the number of trainable parameters. These results demonstrate that LoRA provides substantial computational benefits beyond memory savings for \ourmethod{}.

\begin{table}[h]
\centering
\caption{\textit{ImageNet with ViT architecture.} FLOPs breakdown (TFLOPs) for a single \ourmethod{} update step with and without LoRA.}
\label{tab:orthograd_flops}
\small
\begin{tabular}{lccc}
\toprule
\textbf{Method} & \textbf{Grad} & \textbf{QR} & \textbf{Total} \\
\midrule
\ourmethod{} & 26.98 & 2.86 & 29.84 \\
\ourmethod{} + LoRA & 17.68 & 0.03 & 17.72 \\
\midrule
\textbf{Reduction} & 1.53$\times$ & 97.85$\times$ & 1.68$\times$ \\
\bottomrule
\end{tabular}
\end{table}

\section{Conclusions}
In this work, we focus on machine unlearning in the low data regime, where access to the retain data is limited. We present \ourmethod{}, a novel machine unlearning method that 
projects the aggregated unlearn gradient onto the subspace orthogonal to the individual gradients of the retain batch. We demonstrated the benefit of using per-sample gradient in the retain batch instead of averaging the retain gradients. Then, we demonstrate through various datasets, architectures, and unlearning setups the superiority of \ourmethod{} over existing machine unlearning methods. Since \ourmethod{} works well even without access to a large retain set, it can be applied in real-life use-cases where training data availability is constrained.





\newpage
\bibliographystyle{splncs04}
\bibliography{main}

\newpage
\appendix

\section{Additional Results}
\label{app:exp_results}

\subsection{Image classification}
We revisit Section~\ref{sec:image_classification} and evaluate \ourmethod{} on CIFAR-10 with a ResNet-18 backbone. In the random-sampling setting, we draw \(5{,}000\) images as the retain set and define the unlearn set as another \(5{,}000\) images sampled uniformly from the training data. In the class-forgetting setting, the unlearn set contains all training images from the designated unlearn class. In both cases, evaluation is performed on the standard test set. Results in Table~\ref{tab:cifar} mirror our earlier findings: \ourmethod{} reliably attains the unlearning objective, achieving superior performance in the class-forgetting setting while remaining comparable under random sampling.

\begin{table*}[h]
\caption{\textit{CIFAR10 with ResNet18.} Performance is evaluated across two unlearning scenarios: random sampling (3-seed average) and class forgetting (3-class average).}
\label{tab:cifar}
\setlength{\tabcolsep}{4pt}  
\small
\resizebox{\textwidth}{!}{
\begin{tabular}{lccccccccc}
\toprule
& \multicolumn{4}{c}{Random Sampling} && \multicolumn{4}{c}{Class Forgetting} \\
\cmidrule{2-5} \cmidrule{7-10}
Method & $\mathcal{A}_u$ & $\mathcal{A}_r$ & $\mathcal{A}_{test}$ & UIS ($\downarrow$) && $\mathcal{A}_u$ & $\mathcal{A}_r$ & $\mathcal{A}_{test}$ & UIS ($\downarrow$) \\
\midrule
Original & $96.1 \pm 0.28$ & $96.38 \pm 0.33$ & $81.97 \pm 0.00$ & $-$ && $97.3 \pm 1.21$ & $95.91 \pm 0.22$ & $81.97 \pm -$ & $-$ \\
Retrain & $60.43 \pm 0.22$ & $100.00 \pm 0.00$ & $61.04 \pm 0.35$ & $-$ && $0.00 \pm 0.00$ & $100.00 \pm 0.00$ & $55.55 \pm 1.03$ & $-$ \\
FT & $85.94 \pm 6.34$ & $88.49 \pm 7.44$ & $71.52 \pm 6.61$ & $-$ && $93.79 \pm 2.36$ & $100.00 \pm 0.00$ & $82.86 \pm 0.18$ & $-$ \\
\midrule
NegGrad & $40.39 \pm 22.59$ & $39.78 \pm 23.17$ & $34.94 \pm 22.87$ & $0.540 \pm 0.308$ && $0.00 \pm 0.00$ & $18.72 \pm 10.03$ & $15.43 \pm 9.40$ & $0.405 \pm 0.057$ \\
NegGrad+ & $81.42 \pm 0.63$ & $83.00 \pm 0.59$ & $69.25 \pm 1.08$ & $0.082 \pm 0.009$ && $24.54 \pm 33.85$ & $70.56 \pm 25.45$ & $56.42 \pm 24.31$ & $0.305 \pm 0.164$ \\
FISHER & $72.29 \pm 6.59$ & $72.57 \pm 6.55$ & $61.60 \pm 6.75$ & $0.183 \pm 0.089$ && $84.98 \pm 21.79$ & $69.08 \pm 5.77$ & $61.33 \pm 1.92$ & $0.645 \pm 0.143$ \\
Influence & $11.55 \pm 1.42$ & $11.59 \pm 1.30$ & $11.31 \pm 1.48$ & $0.860 \pm 0.019$ && $43.07 \pm 29.16$ & $66.72 \pm 31.99$ & $54.53 \pm 25.95$ & $0.431 \pm 0.043$ \\
SCRUB & $40.18 \pm 5.37$ & $42.51 \pm 4.88$ & $38.58 \pm 4.30$ & $0.519 \pm 0.066$ && $1.75 \pm 1.61$ & $87.32 \pm 3.09$ & $64.7 \pm 1.70$ & $0.116 \pm 0.022$ \\
DUCK & $86.46 \pm 0.19$ & $99.5 \pm 0.29$ & $78.02 \pm 0.35$ & $\underline{0.051} \pm \underline{0.002}$ && $0.00 \pm 0.00$ & $41.28 \pm 4.25$ & $35.35 \pm 4.82$ & $0.284 \pm 0.029$ \\
GDR-GMA & $81.75 \pm 0.40$ & $99.08 \pm 0.41$ & $71.6 \pm 0.28$ & $0.065 \pm 0.002$ && $0.19 \pm 0.19$ & $89.88 \pm 5.31$ & $66.79 \pm 3.30$ & $0.093 \pm 0.021$ \\
SSD & $96.10 \pm 0.23$ & $96.38 \pm 0.27$ & $81.97 \pm 0.00$ & $0.090 \pm 0.000$ && $0.040 \pm 0.06$ & $80.49 \pm 2.60$ & $61.99 \pm 1.41$ & $0.122 \pm 0.008$ \\
SCAR & $81.38 \pm 1.15$ & $96.07 \pm 0.17$ & $78.92 \pm 0.84$ & $\textbf{0.024} \pm \textbf{0.010}$ && $0.00 \pm 0.00$ & $91.39 \pm 3.09$ & $72.48 \pm 3.41$ & $\underline{0.058} \pm \underline{0.021}$ \\
\midrule
\ourmethod{} & $81.18 \pm 2.92$ & $93.27 \pm 0.71$ & $73.35 \pm 0.41$ & $0.058 \pm 0.005$ && $0.36 \pm 0.35$ & $97.57 \pm 0.43$ & $74.87 \pm 1.05$ & $\textbf{0.045} \pm \textbf{0.006}$ \\
\bottomrule
\end{tabular}
}
\end{table*}

\subsection{Ablation study - image classification}
In this section, we analyze the individual contributions of each component in \ourmethod{}, following a similar procedure to Section~\ref{sec:ablation}. Specifically, we revisit the image classification unlearning setup described in Section~\ref{sec:image_classification} and compare the following variants: OrthoGrad Mean, OrthoGrad Per-sample, OrthoGrad Mean/Per-sample+LoRA, and the full OrthoGrad approach. The experiments are conducted using a ResNet18 model trained on the CIFAR10 dataset for the unlearning task. The results, averaged over 3 seeds, are presented in Table~\ref{tab:img_ablation}.
Notably, \ourmethod{} achieves higher test accuracy and lower UIS values, indicating superior unlearning effectiveness without sacrificing generalization. This highlights the importance of combining both per-sample gradient components and the low-ranking optimization strategy.

\begin{table}[h]
\caption{\textit{Image classification ablation study.} Evaluation of \ourmethod{} variants on CIFAR10 random unlearning. Values are averaged over 3 different seeds. }
\label{tab:img_ablation}
\resizebox{\linewidth}{!}{
\begin{tabular}{l cccc}
\toprule
Method & $\mathcal{A}_u$ & $\mathcal{A}_r$ & $\mathcal{A}_{test}$ & UIS ($\downarrow$) \\
\midrule
\ourmethod{} mean & $82.47 \pm 0.51$ & $84.33 \pm 0.45$ & $70.06 \pm 0.19$ & $0.076 \pm 0.002$ \\
\ourmethod{} Per sample & $82.10 \pm 0.21$ & $83.21 \pm 0.45$ & $71.75 \pm 0.42$ & $0.064 \pm 0.002$ \\
\ourmethod{} Mean + LORA & $82.07 \pm 0.25$ & $84.83 \pm 1.03$ & $71.42 \pm 0.72$ & $0.066 \pm 0.004$ \\
\ourmethod{} Per sample + LORA & $82.01 \pm 0.26$ & $82.86 \pm 0.45$ & $72.04 \pm 0.27$ & $0.062 \pm 0.002$ \\
\midrule
\textbf{\ourmethod{}} & $\mathbf{81.35 \pm 1.2}$ & $\mathbf{87.44 \pm 1.27}$ & $\mathbf{73.34 \pm 1.32}$ & $\mathbf{0.058 \pm 0.015}$ \\
\bottomrule
\end{tabular}
}
\end{table}

\subsection{The relation between retain and unlearn gradients}
Here, we explore the relationship between the retain and unlearn gradients, which plays a central role in the effectiveness of our method. A natural concern is that if the gradients of the unlearn and retain sets are highly aligned, the orthogonal component of the unlearn gradient used in our projection step could be small. This may potentially weaken the unlearning effect. To address this concern, we analyze the cosine similarity between the retain and unlearn gradients over the course of training. Specifically, we repeat the class and random forgetting experiments on the ImageNet dataset using the ResNet18 architecture and report the cosine similarity between the unlearn and retain gradients. The results are presented in Figure~\ref{fig:cosine_plots}. The results show that, for the majority of the unlearning process, the gradients are not highly aligned—indicated by consistently non-zero cosine similarity values. These observations highlight the importance of the projection step in \ourmethod{}.

\begin{figure}[h]
    \centering
    \begin{minipage}{0.49\linewidth}
        \centering
        \includegraphics[width=\linewidth]{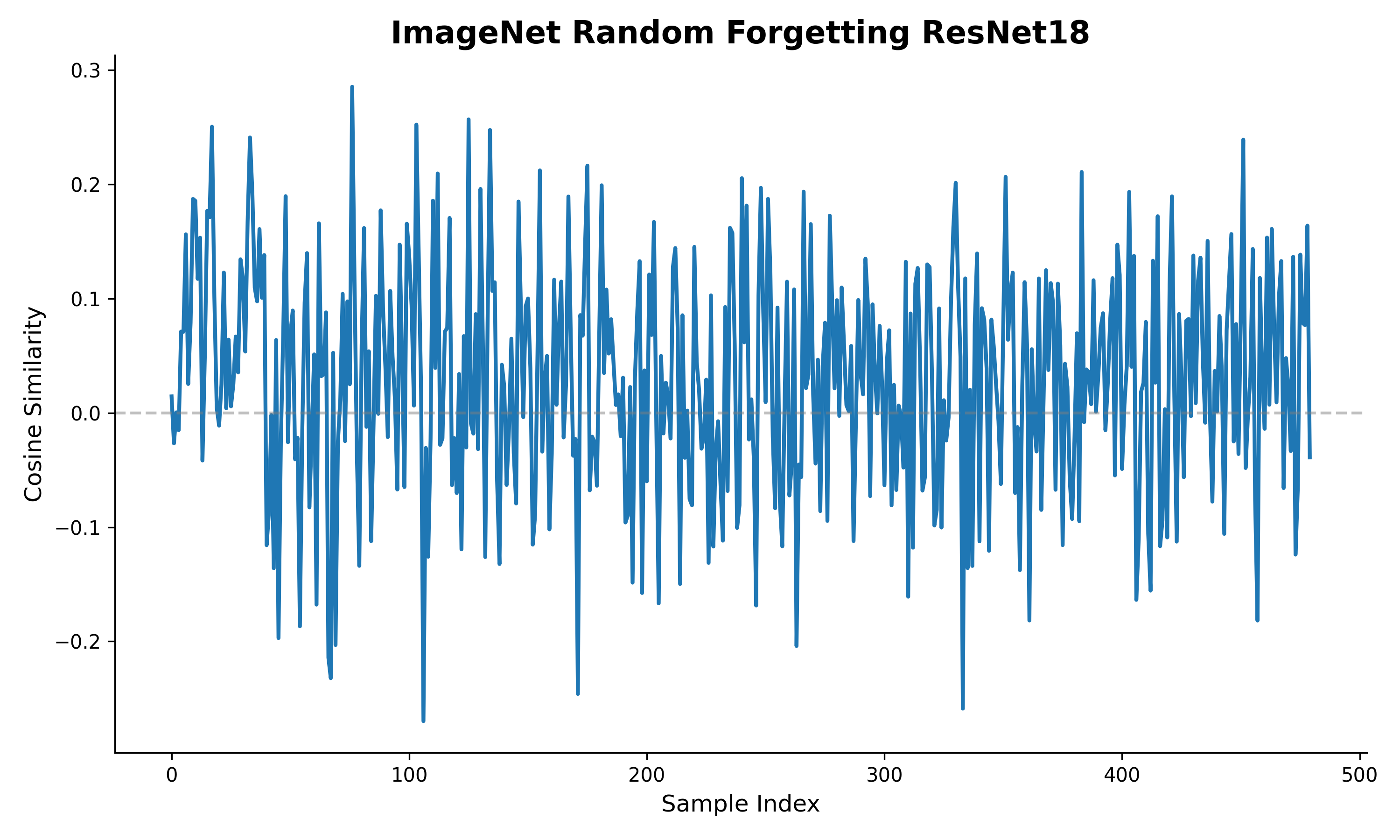}
    \end{minipage}
    \begin{minipage}{0.49\linewidth}
        \centering
        \includegraphics[width=\linewidth]{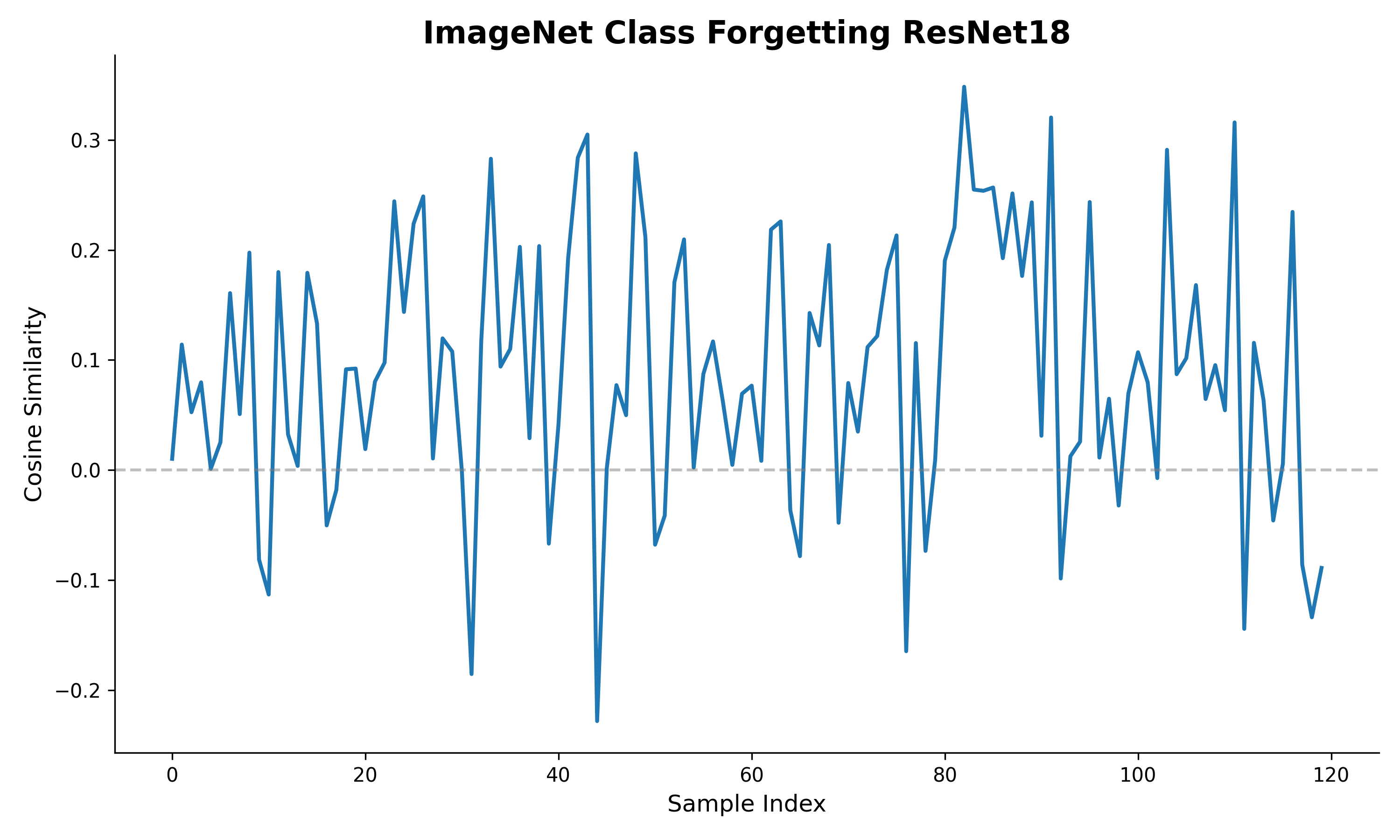}
    \end{minipage}
    \caption{Cosine similarity between retain and unlearn gradients during the unlearning process on ImageNet using ResNet18. The non-zero similarity values throughout training indicate that the gradients are not highly aligned, validating the importance of the projection step in \ourmethod{}.}
    \label{fig:cosine_plots}
\end{figure}

\subsection{Robustness to \(\alpha\)}
In Section~\ref{sec:practical_alg}, we describe the unlearning update direction defined by \ourmethod{}. This update rule incorporates a balancing parameter \(\alpha\), which interpolates between the unlearn and retain gradients. 
Here, we conduct an ablation study to assess the robustness of our method w.r.t \(\alpha\). Specifically, we run the class forgetting experiment on Imagenet using the ViT-b16 architecture across varying values \( \alpha \in \{0, 0.3, 0.7, 0.9\} \). The results are presented in Figure~\ref{fig:robust_retain}.
Notably, similar to the approach in~\cite{bonato2024retain}, \ourmethod{} is also capable of operating solely based on the unlearn gradient by setting \(\alpha {=} 0\). In this case, the update reduces to performing gradient ascent without relying on the retain set.

\begin{figure}[h]
    \centering
    \includegraphics[width=.8\columnwidth]{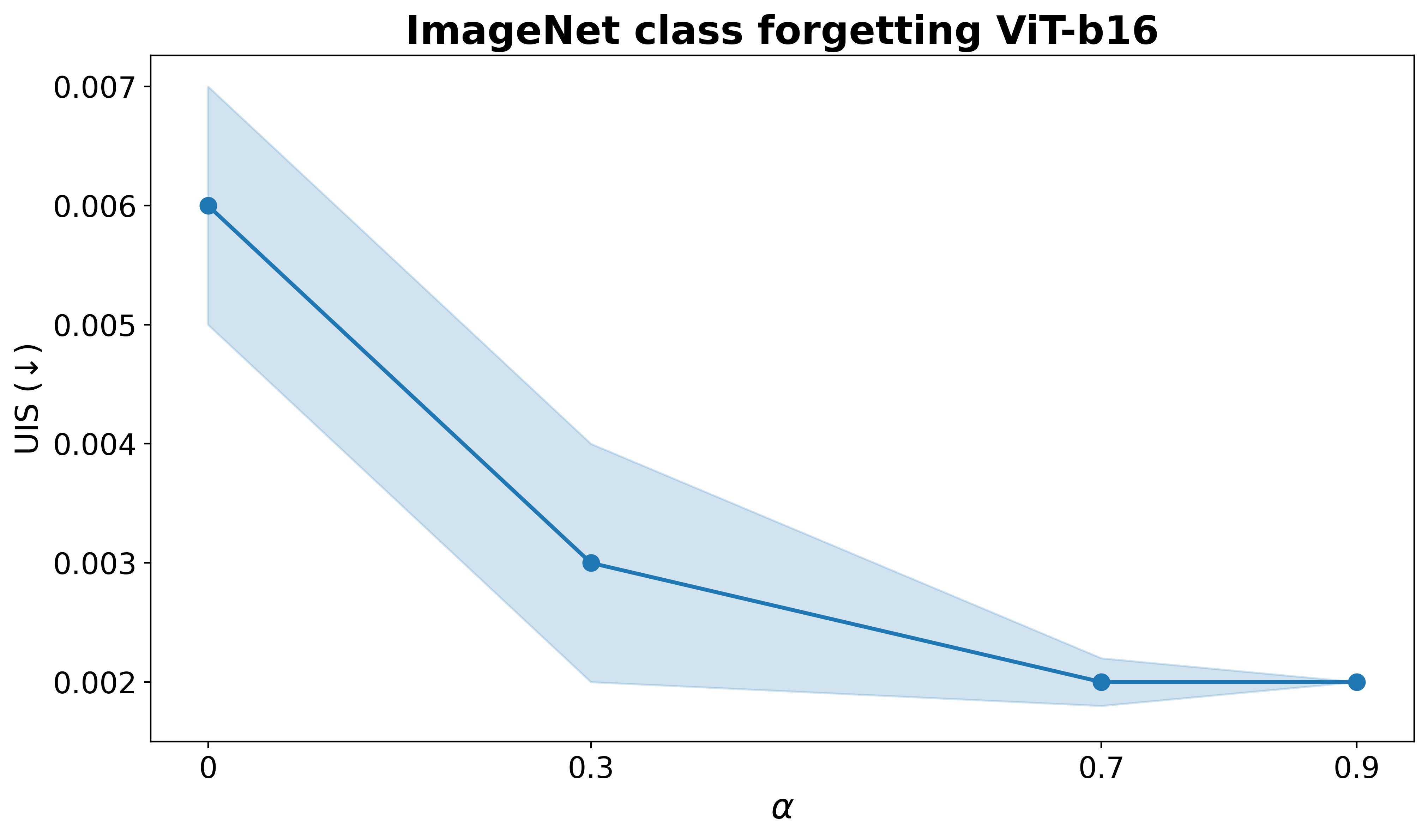}
    \caption{Comparison across different values of $\alpha$ for OrthoGrad on the ImageNet class forgetting setup using ViT.}
    \label{fig:robust_retain}
\end{figure}

\subsection{MIA as machine unlearning metric}
MIA can serve as a useful metric for evaluating machine unlearning. However, MIA has fundamental limitations that make it insufficient as a standalone measure of unlearning quality. Notably, an MIA score of 0, often interpreted as a perfect result, can indicate that the model has undergone catastrophic forgetting. In such cases, the model may not only forget the targeted data but also lose generalization capabilities. This shortcoming has been highlighted in prior work. \cite{foster2024fast} point out that low MIA accuracy may result from aggressive dampening that leads to degraded model performance. Similarly, \cite{hayes2024inexact} emphasized that relying solely on MIA can produce a misleading sense of unlearning efficacy, particularly when models are overly perturbed or when forgetting extends beyond the target data. Given these limitations, we believe MIA should be considered alongside other metrics that directly measure the trade-off between forgetting and retention. For completeness, we revisited the class forgetting benchmark on ImageNet using the ViT architecture and report the corresponding MIA scores. The results are presented in Table~\ref{tab:mia}. 

\begin{table}[h]
\caption{\textit{ImageNet using ViT-b16}. The results on class sampling setup are averaged over 3 classes.}
\label{tab:mia}
\begin{center}
\begin{tabular}{lccccc}
\toprule
Method & $\mathcal{A}_u$ & $\mathcal{A}_r$ & $\mathcal{A}_{test}$ & UIS ($\downarrow$) & MIA ($\downarrow$) \\
\midrule
Original & $94.26 \pm 0.19$ & $94.04 \pm 0.24$ & $81.06 \pm 0.00$ & $-$ & $-$ \\
Retrain & $0.00 \pm 0.00$ & $95.08 \pm 0.15$ & $3.29 \pm 0.16$ & $-$ & $-$ \\
FT & $91.97 \pm 8.18$ & $99.92 \pm 0.02$ & $74.07 \pm 0.05$ & $-$ & $-$ \\
\midrule
NegGrad & $0.41 \pm 0.34$ & $89.63 \pm 3.84$ & $76.53 \pm 4.16$ & $0.030 \pm 0.028$ & $0.23 \pm 0.39$ \\
NegGrad+ & $0.05 \pm 0.07$ & $90.67 \pm 4.22$ & $77.17 \pm 4.73$ & $\underline{0.024} \pm \underline{0.028}$ & $0.38 \pm 0.12$ \\
FISHER & $98.02 \pm 1.82$ & $94.14 \pm 0.00$ & $80.99 \pm 0.00$ & $0.605 \pm 0.011$ & $0.83 \pm 0.00$ \\
Influence & $15.28 \pm 20.76$ & $91.54 \pm 2.21$ & $78.13 \pm 2.06$ & $0.112 \pm 0.121$ & $0.04 \pm 0.06$ \\
SCRUB & $45.82 \pm 37.80$ & $96.18 \pm 2.91$ & $75.81 \pm 4.13$ & $0.314 \pm 0.286$ & $0.84 \pm 0.01$ \\
DUCK & $0.00 \pm 0.00$ & $99.39 \pm 0.02$ & $70.68 \pm 0.29$ & $0.064 \pm 0.001$ & $0.11 \pm 0.17$ \\
GDR-GMA & $0.15 \pm 0.16$ & $97.56 \pm 0.18$ & $77.03 \pm 0.28$ & $0.025 \pm 0.001$ & $0.54 \pm 0.41$ \\
\midrule
OrthoGrad & $0.10 \pm 0.14$ & $94.26 \pm 0.04$ & $80.73 \pm 0.05$ & $\textbf{0.002} \pm \textbf{0.000}$ & $0.30 \pm 0.24$ \\
\bottomrule
\end{tabular}
\end{center}
\end{table}

\subsection{Benefits of Integrating LoRA into OrthoGrad}

In Section~\ref{sec:practical_alg}, we detail the steps of \ourmethod{}, including the integration of LoRA modules. Here, we further elaborate on the motivation behind this design choice and highlight the specific benefits LoRA brings to scalable machine unlearning. One key advantage of using LoRA is its ability to localize parameter updates, which intuitively helps reduce unintended interference with retained knowledge. By limiting the impact of unlearning to a low-rank subspace, LoRA allows for more controlled and precise modifications, which aligns well with the goal of minimizing collateral forgetting. Additionally, LoRA is significantly more parameter-efficient than fine-tuning full model weights. This not only leads to reduced memory usage but also lowers the computational cost, making the approach viable for large-scale models like Whisper. These advantages collectively make LoRA a natural fit for \ourmethod{}, enabling both effective unlearning and practical deployment in real-world systems.

\subsection{Standard Machine Unlearning Setup}
This work addresses the low-data regime, where the available retain dataset is limited in size. We evaluate \ourmethod{} within the standard machine unlearning framework, in which the retain set coincides with the original training set used for the pretrained model. The experiments are conducted on the CIFAR10 dataset using the ResNet18 architecture, considering both random sampling and class forgetting scenarios. The results are presented in Table~\ref{tab:cifar_full}. We show that \ourmethod{} remains effective even when the retain set is relatively large, as further discussed in Section~\ref{sec:robustness}.

\begin{table*}[h]
\caption{\textit{CIFAR10 with ResNet18.} Performance is evaluated across two unlearning scenarios: random sampling (3-seed average) and class forgetting (3-class average).}
\label{tab:cifar_full}
\setlength{\tabcolsep}{4pt}  
\small
\resizebox{\textwidth}{!}{
\begin{tabular}{lccccccccc}
\toprule
& \multicolumn{4}{c}{Random Sampling} && \multicolumn{4}{c}{Class Forgetting} \\
\cmidrule{2-5} \cmidrule{7-10}
Method & $\mathcal{A}_u$ & $\mathcal{A}_r$ & $\mathcal{A}_{test}$ & UIS ($\downarrow$) && $\mathcal{A}_u$ & $\mathcal{A}_r$ & $\mathcal{A}_{test}$ & UIS ($\downarrow$) \\
\midrule
Original & $96.38 \pm 0.37$ & $96.09 \pm 0.04$ & $81.97 \pm 0.00$ & $-$ && $97.3 \pm 1.21$ & $95.98 \pm 0.13$ & $81.97 \pm 0.00$ & $-$ \\
Retrain & $82.09 \pm 0.3$ & $99.9 \pm 0.1$ & $82.6 \pm 0.63$ & $-$ && $0.00 \pm 0.00$ & $99.67 \pm 0.31$ & $74.22 \pm 1.56$ & $-$ \\
\midrule
NegGrad & $77.47 \pm 2.11$ & $77.84 \pm 1.76$ & $64.43 \pm 1.4$ & $0.134 \pm 0.024$ && $0.00 \pm 0.00$ & $19.23 \pm 11.41$ & $16.17 \pm 10.59$ & $0.401 \pm 0.064$ \\
NegGrad+ & $96.28 \pm 0.07$ & $96.05 \pm 0.34$ & $81.98 \pm 0.2$ & $0.088 \pm 0.002$ && $18.69 \pm 25.85$ & $68.64 \pm 25.74$ & $54.26 \pm 24.24$ & $0.283 \pm 0.120$ \\
FISHER & $96.3 \pm 0.48$ & $96.00 \pm 0.04$ & $81.83 \pm 0.1$ & $0.088 \pm 0.002$ &&  $0.06 \pm 0.05$ & $11.54 \pm 0.36$ & $10.37 \pm 0.265$ & $0.437 \pm 0.001$ \\
Influence & $96.31 \pm 0.4$ & $96.07 \pm 0.04$ & $81.96 \pm 0.05$ & $0.087 \pm 0.002$ && $17.54 \pm 12.92$ & $63.74 \pm 23.27$ & $50.92 \pm 17.18$ & $0.296 \pm 0.084$ \\
SCRUB & $55.58 \pm 2.26$ & $56.73 \pm 2.37$ & $55.23 \pm 3.14$ & $0.324 \pm 0.036$ && $0.02 \pm 0.00$ & $94.02 \pm 2.3$ & $72.91 \pm 0.76$ & $\underline{0.055} \pm \underline{0.004}$ \\
DUCK & $81.71 \pm 1.43$ & $89.34 \pm 2.06$ & $82.03 \pm 1.40$ & $\textbf{0.015} \pm \textbf{0.009}$ && $0.00 \pm 0.00$ & $79.12 \pm 13.5$ & $66.93 \pm 9.75$ & $0.091 \pm 0.072$ \\
GDR-GMA & $81.1 \pm 0.70$ & $86.18 \pm 3.26$ & $73.16 \pm 2.59$ & $0.059 \pm 0.020$ && $0.00 \pm 0.00$ & $87.99 \pm 4.33$ & $67.94 \pm 5.25$ & $0.085 \pm 0.032$ \\
SSD & $25.35 \pm 39.73$ & $94.95 \pm 1.22$ & $74.23 \pm 1.77$ & $0.200 \pm 0.230$ && $25.35 \pm 32.44$ & $94.95 \pm 1.00$ & $74.23 \pm 1.44$ & $0.202 \pm 0.189$ \\
SCAR & $80.11 \pm 1.65$ & $91.06 \pm 1.10$ & $79.21 \pm 1.28$ & $\underline{0.029} \pm \underline{0.016}$ && $0.00 \pm 0.00$ & $87.71 \pm 0.85$ & $70.75 \pm 1.32$ & $0.068 \pm 0.008$ \\
\midrule
\ourmethod{} & $81.35 \pm 1.2$ & $87.44 \pm 1.27$ & $73.34 \pm 1.32$ & $0.058 \pm 0.015$ && $0.67 \pm 0.29$ & $96.9 \pm 0.32$ & $75.67 \pm 0.93$ & $\textbf{0.042} \pm \textbf{0.007}$ \\
\bottomrule
\end{tabular}
}
\end{table*}

\subsection{Effectiveness of \ourmethod{} in the presence of larger forget sets}

To evaluate \ourmethod{} on larger forget sets, we extend the class forgetting setup to simultaneously remove three classes. Using the ResNet-18 architecture on CIFAR-10, we repeat the experiment across three different class combinations and report the mean and standard deviation of the results (Table~\ref{tab:img_unlearning}). The retain set consists of 5K images, consistent with the original setup.

\begin{table}[h]
\caption{\textit{Image classification unlearning results.} Comparison of \ourmethod{} and GDR-GMA on CIFAR10 class unlearning. Values are averaged over 3 different combinations of classes.}
\label{tab:img_unlearning}
\resizebox{\linewidth}{!}{
\begin{tabular}{l ccccccc}
\toprule
Method & $\mathcal{A}_u$ & $\mathcal{A}_r$ & $\mathcal{A}_{test}$ & $\mathcal{A}_{test}$ (w/o unlearned classes) & UIS ($\downarrow$) & UIS (w/o unlearned classes) ($\downarrow$) \\
\midrule
GDR-GMA & $0.48 \pm 0.04$ & $95.80 \pm 2.58$ & $56.94 \pm 0.41$ & $81.20 \pm 0.57$ & $0.155 \pm 0.002$ & $0.024 \pm 0.014$ \\
\textbf{\ourmethod{}} & $\mathbf{0.0 \pm 0.0}$ & $\mathbf{97.15 \pm 1.13}$ & $\mathbf{59.11 \pm 2.82}$ & $\mathbf{84.43 \pm 4.21}$ & $\mathbf{0.139 \pm 0.014}$ & $\mathbf{0.014 \pm 0.004}$ \\
\bottomrule
\end{tabular}
}
\end{table}

The overall accuracy is naturally lower because of the removal of three classes. Therefore, we also report the accuracy of the original test set after removing the samples of the unlearned classes. Nevertheless, \ourmethod{} achieves better unlearning performance and higher test accuracy compared to GDR-GMA.

Additionally, we conduct random forgetting experiments on ImageNet with the ResNet-18 architecture. The unlearn sets contain 50K, 150K, and 200K samples, corresponding to 10×, 30×, and 40× the size of the unlearn set in our main experiments. The retain set is fixed at 10K samples. The results are shown in Table~\ref{tab:imagenet_unlearn_sizes}.

\begin{table}[h]
\caption{\textit{Random unlearning on ImageNet with varying unlearn set sizes.} Evaluation of OrthoGrad vs. GDR-GMA on ImageNet with 50K, 150K, and 200K unlearn sets. Values shown are averaged over 3 seeds.}
\label{tab:imagenet_unlearn_sizes}
\resizebox{\linewidth}{!}{
\begin{tabular}{lcccccccccccc}
\toprule
\multirow{2}{*}{Method} 
& \multicolumn{4}{c}{50K samples} 
& \multicolumn{4}{c}{150K samples} 
& \multicolumn{4}{c}{200K samples} \\
\cmidrule(lr){2-5} \cmidrule(lr){6-9} \cmidrule(lr){10-13}
& $\mathcal{A}_u$ & $\mathcal{A}_r$ & $\mathcal{A}_{test}$ & UIS
& $\mathcal{A}_u$ & $\mathcal{A}_r$ & $\mathcal{A}_{test}$ & UIS
& $\mathcal{A}_u$ & $\mathcal{A}_r$ & $\mathcal{A}_{test}$ & UIS \\
\midrule
Original 
& 79.26 & 79.17 & 69.76 & -
& 79.32 & 79.17 & 69.76 & -
& 79.31 & 79.17 & 69.76 & - \\
GDR-GMA 
& 68.43 & 100.0 & 63.33 & 0.05566
& 68.85 & 100.0 & 62.13 & 0.06123
& 68.60 & 100.0 & 61.75 & 0.06573 \\
\ourmethod{} 
& 70.09 & 89.41 & 63.53 & \textbf{0.04700}
& 69.48 & 91.45 & 62.21 & \textbf{0.05608}
& 69.75 & 91.11 & 62.35 & \textbf{0.05317} \\
\bottomrule
\end{tabular}
}
\end{table}

These results show that \ourmethod{} remains effective with larger unlearn sets in the random forgetting setup. Compared to GDR-GMA, it achieves unlearn accuracy closer to the original model’s test accuracy, leading to better overall unlearning process.

\subsection{Batch Composition}
Here, we conduct an ablation study to examine the effect of retain set batch composition. Specifically, we compare the random batch sampling used in our main experiments with an alternative strategy in which each retain batch is restricted to a single class. To this end, we revisit the CIFAR10 class forgetting experiment and evaluate \ourmethod{} under both settings: (i) the original random sampling and (ii) single-class retain batches (excluding the forget class). Results are reported in Table~\ref{tab:ablation_same_class_batch}.

\begin{table}[h]
\caption{\textit{Ablation on batch composition.} Performance of \ourmethod{} with and without sampling from same-class batches.}
\label{tab:ablation_same_class_batch}
\centering
\resizebox{\linewidth}{!}{
\begin{tabular}{lcccc}
\toprule
Method & $\mathcal{A}_u$ & $\mathcal{A}_r$ & $\mathcal{A}_{\text{test}}$ & UIS ($\downarrow$) \\
\midrule
\ourmethod{} same class batch & $0.28 \pm 0.20$ & $94.28 \pm 1.26$ & $72.91 \pm 0.04$ & $0.057 \pm 0.001$ \\
\ourmethod{} & $0.36 \pm 0.35$ & $97.57 \pm 0.43$ & $74.87 \pm 1.05$ & $0.045 \pm 0.006$ \\
\bottomrule
\end{tabular}
}
\end{table}

The results suggest that using shuffled retain batches leads to better performance for \ourmethod{}. We hypothesize that this is because shuffled batches enable smoother optimization. In contrast, projecting the unlearn gradient onto the subspace spanned by a single class may constrain the optimization dynamics.

\subsection{Projection-Induced Signal Loss}
A potential concern with our approach is that by projecting the unlearn gradient away from the retain gradient directions, part of the unlearning signal could be lost. To study this, we measure the ratio between the projection component and the unlearn gradient,$\frac{\lVert g_u^{\perp} \rVert}{\lVert g_u \rVert}$,
where $g_u$ is the gradient from the forget set and $g_u^{\perp}$ is its projection component. This ratio indicates how much of the unlearning signal is preserved after projection.

We conduct experiments on CIFAR-10 with ResNet-18 under two setups: class forgetting and random forgetting. In each case, we track the projection ratio reporting its mean, standard deviation, and range. The results indicate that while part of the signal is removed during projection, the ratio remains consistently above zero, showing that sufficient unlearning signal is retained to remain effective. Specifically, in the class forgetting setting, we obtain $0.241 \pm 0.036$ (min: $0.181$, max: $0.597$), and in the random forgetting setting $0.229 \pm 0.028$ (min: $0.176$, max: $0.535$). The evolution of this ratio over training steps is illustrated in Figure~\ref{fig:norm_plots}.

\subsection{Sensitivity to the WER Stopping Threshold}
\label{app:wer-threshold}
In our ASR unlearning experiments, we use an early-stopping rule based on the unlearn-set WER, stopping once $\mathcal{W}_{\text{unlearn}}$ reaches a target threshold of $75\%$ and selecting the corresponding checkpoint. To empirically justify that the precise threshold value is not critical within a reasonable range, we visualize the evolution of $\mathcal{W}_{\text{unlearn}}$ over training. As shown in Fig.~\ref{fig:wer_jump_placeholder}, $\mathcal{W}_{\text{unlearn}}$ typically remains relatively stable for the first epochs and then exhibits a sharp jump in the final stages of training. This jump-like behavior implies that crossing a threshold (e.g., between $60\%$--$80\%$) often occurs within a single optimization epoch, so nearby threshold choices select checkpoints with very similar training dynamics. Overall, this provides empirical support that our ASR unlearning results are relatively insensitive to the exact WER stopping threshold, as long as it is chosen within a reasonable range.

\begin{figure}[h]
    \centering
    \begin{minipage}{0.49\linewidth}
        \centering
        \includegraphics[width=\linewidth]{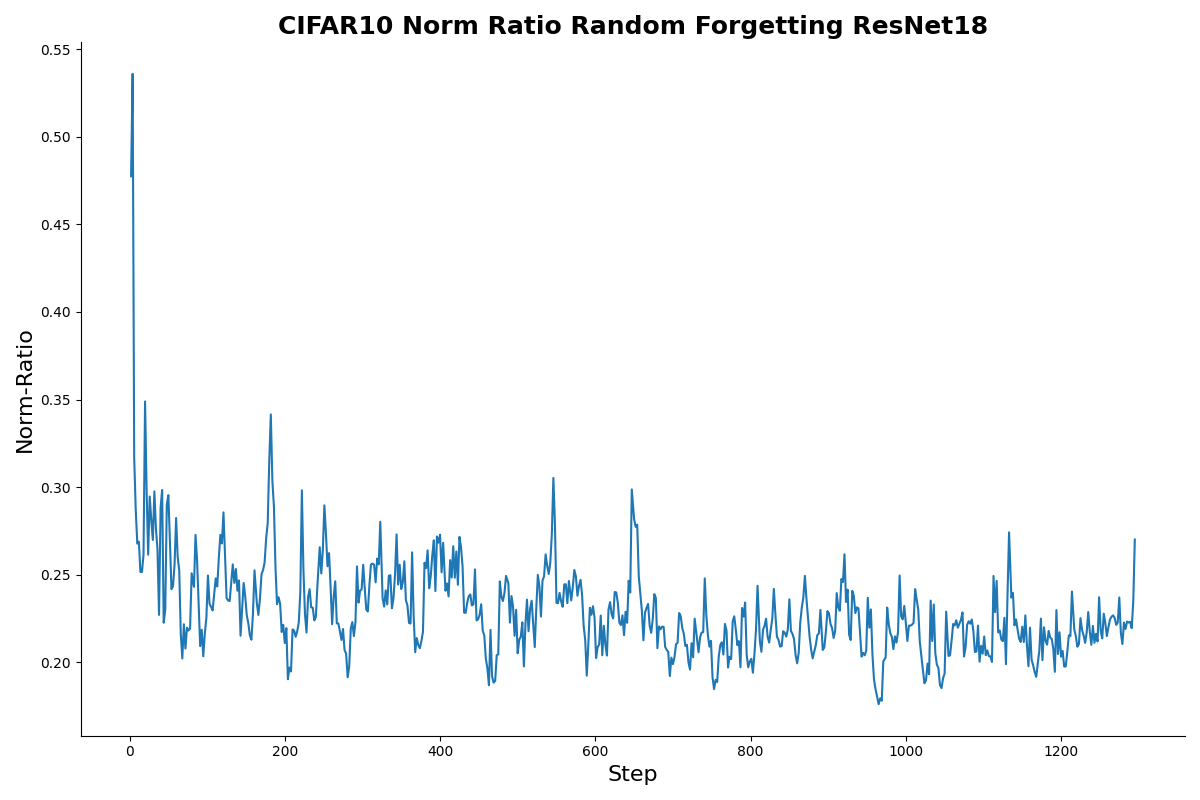}
    \end{minipage}
    \begin{minipage}{0.49\linewidth}
        \centering
        \includegraphics[width=\linewidth]{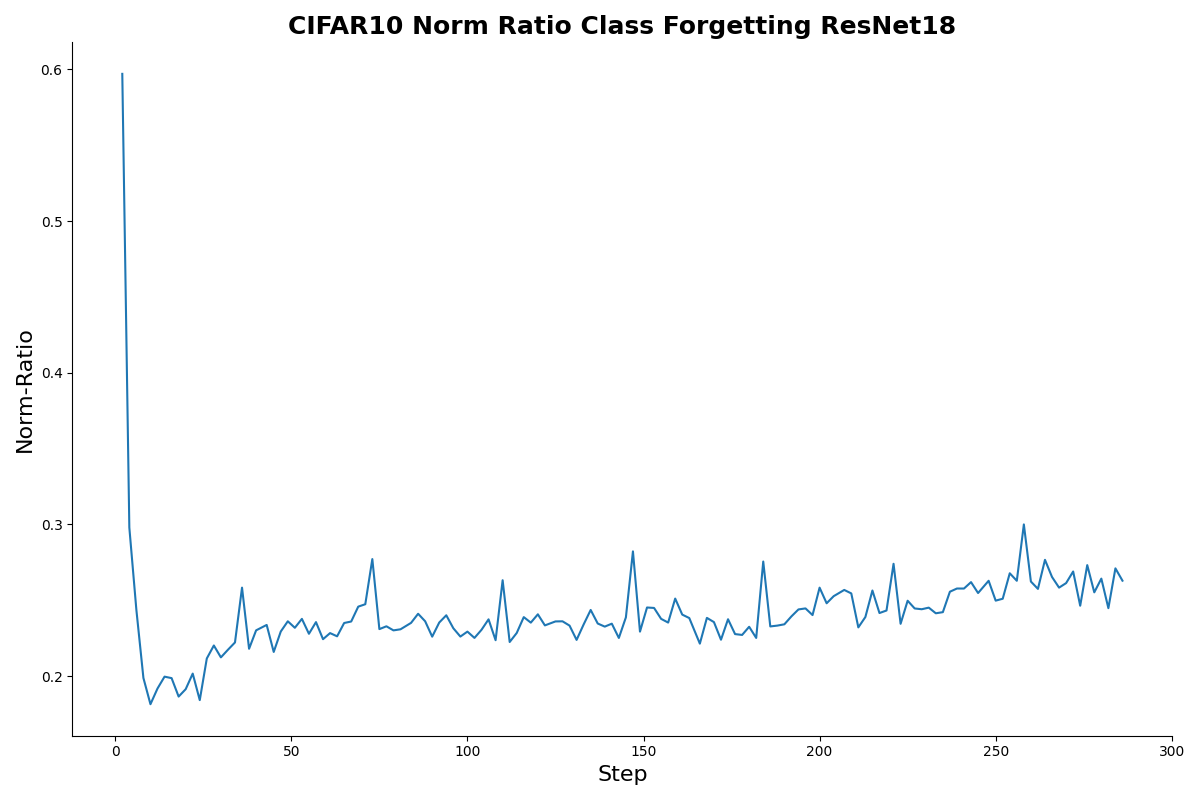}
    \end{minipage}
    \caption{Measured ratio between the projection component and the unlearn gradient conducted on CIFAR-10 with ResNet-18 under two setups class forgetting and random forgetting.}
    \label{fig:norm_plots}
\end{figure}

\section{Experimental details}
\label{app:experimental_details}
\subsection{Image classification}
\label{app:classification}
We provide additional details about the image classification machine unlearning setup, including general information and the hyperparameter search performed for each method. This setup leverages the splits from the CIFAR10 and ImageNet datasets, focusing on either random unlearning samples or specific class samples. Results are reported across three random seeds or three different classes. All methods are trained for 30 epochs, with each setup utilizing its corresponding stopping criteria as explained in \ref{sec:image_classification}.

\paragraph{Retrain.} We performed grid search for learning rate ($\eta$), and batch size for $\mathcal{D}_r$. Specifically, for $\eta$, we searched over $\{1e-3, 1e-2, 1e-1, 3e-2\}$, and for retain batch size, we searched over $\{256, 128\}$.
\textbf{For ViT architecture} The optimal parameters for random and class forgetting respectively $\eta = (3e-2, 3e-2)$, retain batch size = $(128, 128)$. \textbf{For ResNet18 architecture on ImageNet} The optimal parameters for random and class forgetting respectively $\eta = (1e-1, 1e-1)$, retain batch size = $(128, 128)$. \textbf{For ResNet18 architecture on CIFAR-10} The optimal parameters for random and class forgetting respectively $\eta = (1e-2, 1e-2)$, retain batch size = $(128, 128)$.
Training was performed for 100 epochs for ImageNet-based models and 30 epochs for CIFAR-10.

\paragraph{\ourmethod{}.} We performed grid search for the combination ($\alpha$) and learning rate ($\eta$), and batch sizes for $\mathcal{D}_u$ and $\mathcal{D}_r$. We apply LoRA modules to all linear layers within the self-attention and cross-attention layers. We set the rank to $8$ and the scaling factor to $32$. Specifically, for $\alpha$, we searched over $\{0.9, 0.8\}$, and for $\eta$, we searched over $\{0.001, 0.01\}$. For the retain batch size, we considered $\{256, 128\}$, and for the unlearn batch size, we searched over $\{256, 128\}$.
\textbf{For ViT architecture} The optimal parameters for random and class forgetting respectively $\alpha = (0.9, 0.8)$,  $\eta = (0.001, 0.001)$, retain batch size = $(128, 256)$, unlearn batch size = $(128, 256)$. \textbf{For ResNet18 architecture} The optimal parameters for random and class forgetting respectively $\alpha = (0.9, 0.8)$,  $\eta = (0.001, 0.001)$, retain batch size = $(256, 256)$, unlearn batch size = $(128, 256)$.
 
\paragraph{NegGrad.} We performed grid search for learning rate ($\eta$), and batch size for $\mathcal{D}_u$. Specifically, for $\eta$, we searched over $\{1e-3, 1e-4, 1e-5, 1e-6\}$, and for unlearn batch size, we searched over $\{256, 128\}$.
\textbf{For ViT architecture} The optimal parameters for random and class forgetting respectively $\eta = (1e-4, 1e-4)$, unlearn batch size = $(256, 128)$. \textbf{For ResNet18 architecture} The optimal parameters for random and class forgetting respectively $\eta = (1e-5, 1e-5)$, unlearn batch size = $(128, 256)$.

\paragraph{NegGrad+.} We performed grid search for learning rate ($\eta$), and batch sizes for $\mathcal{D}_u$ and $\mathcal{D}_r$. Specifically, for $\eta$, we searched over $\{1e-3, 1e-4, 1e-5, 1e-6\}$. For the retain batch size, we considered $\{256, 128\}$, and for unlearn batch size, we searched over $\{256, 128\}$.
\textbf{For ViT architecture} The optimal parameters for random and class forgetting respectively $\eta = (1e-3, 1e-3)$, retain batch size = $(128, 256)$, unlearn batch size = $(128, 256)$. \textbf{For ResNet18 architecture} The optimal parameters for random and class forgetting respectively $\eta = (1e-6, 1e-5)$, retain batch size = $(256, 256)$, unlearn batch size = $(256, 256)$.

\paragraph{Fisher.} We performed grid search for  ($\alpha$), and batch size for $\mathcal{D}_r$. Specifically, for $\alpha$, we searched over $\{1e-7, 1e-8, 1e-9\}$, and for retain batch size, we searched over $\{128, 256\}$.
\textbf{For ViT architecture} The optimal parameters for random and class forgetting respectively $\eta = (1e-9, 1e-9)$, retain batch size = $(128, 256)$. \textbf{For ResNet18 architecture} The optimal parameters for random and class forgetting respectively $\eta = (1e-9, 1e-9)$, retain batch size = $(128, 256)$.

\paragraph{Influence.} We performed grid search for ($\alpha$), and batch sizes for $\mathcal{D}_u$ and $\mathcal{D}_r$. Specifically, for $\alpha$, we searched over $\{1, 0.1, 0.01, 1e-3\}$. For the retain batch size, we considered $\{64, 128, 256\}$, and for unlearn batch size, we searched over $\{64, 128, 256\}$.
\textbf{For ViT architecture} The optimal parameters for random and class forgetting respectively $\eta = (1, 1)$, retain batch size = $(256, 128)$ and for unlearn batch size = $(128, 128)$. \textbf{For ResNet18 architecture} The optimal parameters for random and class forgetting respectively $\eta = (0.001, 1)$, retain batch size = $(64, 256)$ and for unlearn batch size = $(256, 128)$.

\paragraph{SCRUB.} We performed grid search for learning rate ($\eta$), and batch sizes for $\mathcal{D}_u$ and $\mathcal{D}_r$. Specifically, for $\eta$, we searched over $\{5e-2, 5e-5-3, 5e-4\}$. For the retain batch size, we considered $\{256, 512\}$, and for unlearn batch size, we searched over $\{256, 512\}$. \textbf{For ViT architecture} $\eta = 5e-3$ retain batch size = 256, unlearn batch size = 512. \textbf{For ResNet18 architecture} $\eta = 5e-4$ retain batch size = 256, unlearn batch size = 512.

\begin{figure*}[t]
    \centering
    \begin{minipage}[t]{0.49\textwidth}
        \centering
        \includegraphics[width=\textwidth]{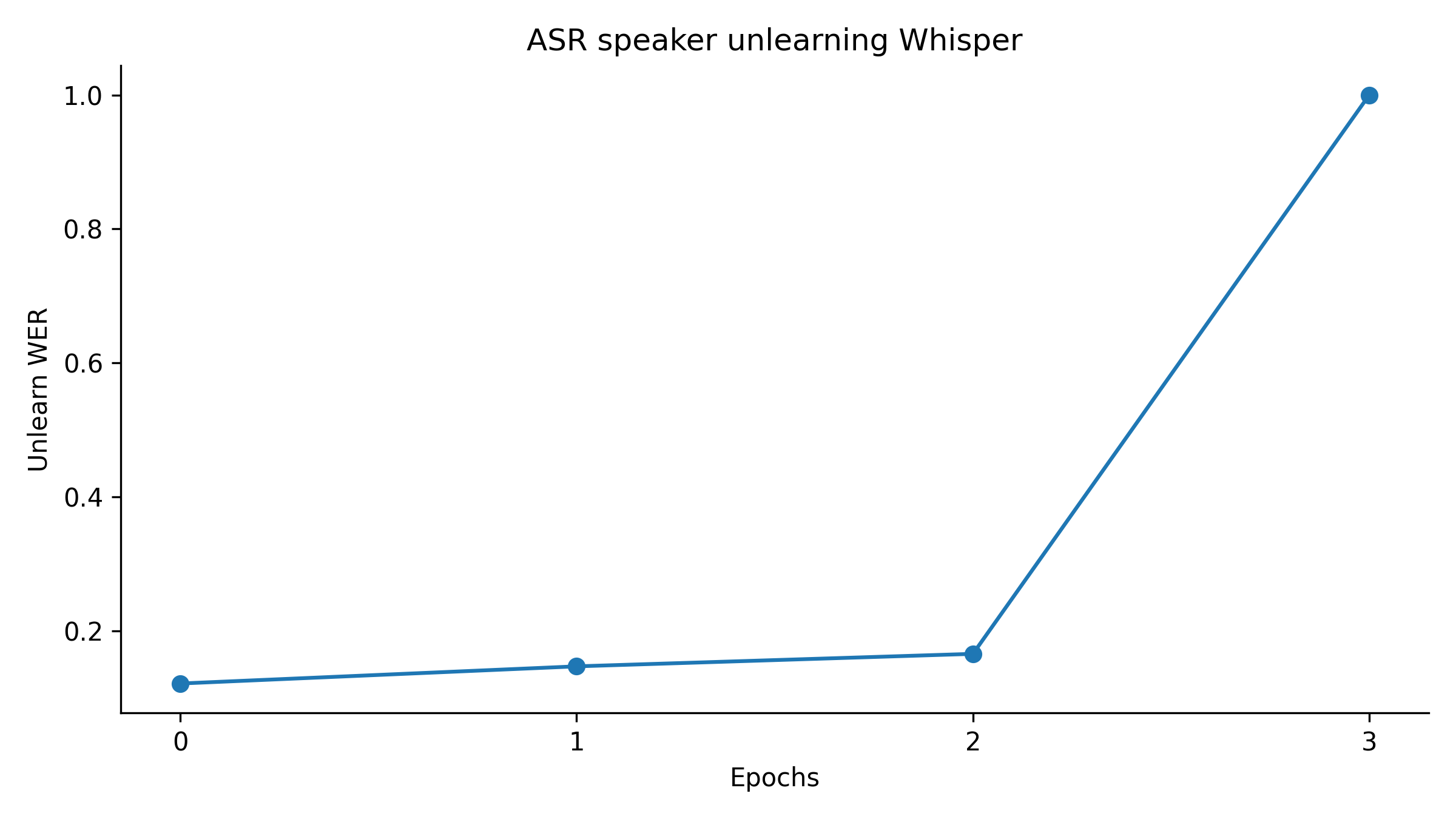}
        \caption{\textit{Unlearn WER across epochs for ASR speaker unlearning with Whisper using OrthoGrad.} The metric remains low in early epochs and exhibits a sharp increase at later epochs, highlighting the phase-transition behavior observed across methods.}
        \label{fig:wer_jump_placeholder}
    \end{minipage}\hfill
    \begin{minipage}[t]{0.49\textwidth}
        \centering
        \includegraphics[width=\textwidth]{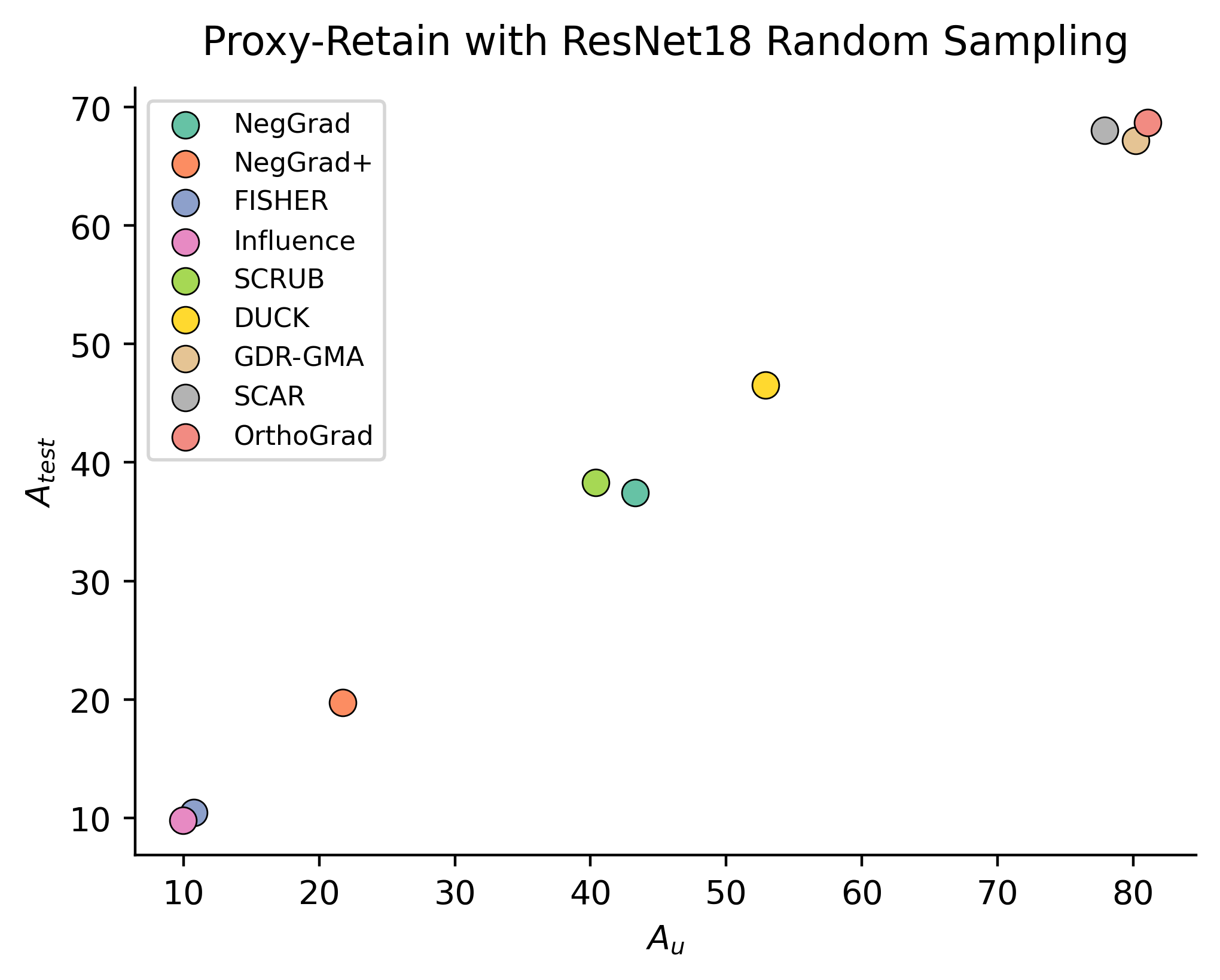}
        \caption{\textit{Proxy-retain evaluation under random sampling with a ResNet-18 model.} Illustrating the trade-off between forgetting (lower $A_u$) and retained generalization (higher $A_{\text{test}}$).}
        \label{fig:test_unlearn_tradeoff}
    \end{minipage}
\end{figure*}

\paragraph{DUCK.} We performed grid search for learning rate ($\eta$), and batch sizes for $\mathcal{D}_u$ and $\mathcal{D}_r$. Specifically, for $\eta$, we searched over $\{2e-4, 5e-4, 5e-5\}$. For the retain batch size, we considered $\{128, 1024\}$, and for unlearn batch size, we searched over $\{128, 1024\}$. \textbf{For ViT architecture} $\lambda_{fgt} = (1.5, 0.5)$, $\lambda_{ret} = (1.5, 1.5)$, batch ratio = $(5, 30)$, $\eta = (5e-5, 2e-4)$, retain and unlearn batch size = $(128, 128)$, $\tau$ = $(3,3)$. \textbf{For ResNet18 architecture} $\lambda_{fgt} = (1.5, 0.5)$, $\lambda_{ret} = (1.5, 1.5)$, batch ratio = $(5, 30)$, $\eta = (5e-4, 2e-4)$, retain and unlearn batch size = $(1024, 1024)$, temperature = $(3,3)$.

\paragraph{SCAR.} We performed grid search for learning rate ($\eta$). Specifically, for \(\eta \),  we searched over \( \{ 1e-4, 5e-4, 1e-3\}\). In addition, to ensure a fair comparison and maintain consistency with prior work, we adopted the remaining hyperparameters as reported in~\cite{bonato2024retain} (refer to Table~9 in~\cite{bonato2024retain} for details).

\paragraph{SSD.} We performed grid search for ($\alpha$), and batch sizes for $\mathcal{D}_u$ and $\mathcal{D}_r$. Specifically, for $\alpha$, we searched over $\{1.1, 1.3, 1.5, 1.7, 5, 10, 30, 50\}$. For the retain batch size, we considered $\{64, 128, 256\}$, and for unlearn batch size, we searched over $\{64, 128, 256\}$.
\textbf{For ViT architecture} The optimal parameters for random and class forgetting respectively $\eta = (1.3, 30)$, retain batch size = $(128, 128)$ and for unlearn batch size = $(128, 256)$. \textbf{For ResNet18 architecture} The optimal parameters for random and class forgetting respectively $\eta = (10, 10)$, retain batch size = $(128, 128)$ and for unlearn batch size = $(256, 256)$.

\paragraph{GDR-GMA.} We performed grid search for learning rate ($\eta$), and batch sizes for $\mathcal{D}_u$ and $\mathcal{D}_r$. Specifically, for $\eta$, we searched over $\{1e-3, 1e-4, 1e-5\}$. For the retain batch size, we considered $\{128, 256\}$, and for unlearn batch size, we searched over $\{128, 256\}$. For all setups and architectures $\eta = 1e-4$ retain batch size = $256$, unlearn batch size = $256$.
\subsection{Automatic Speech Recognition}
\label{app:asr}
Here, we present additional details about the ASR machine unlearning setup. This includes both general information about the setup and the hyperparameter search conducted for each method. In this setup, we utilized the train-$100$ split from the LibriSpeech dataset, targeting unlearning samples from a single speaker. The results are reported across $5$ randomly sampled speakers. For all methods, we utilize a batch size of $48$ for the retain and unlearn sets and $30$ epochs with early stopping. In addition, we use the well-known Adam~\cite{kingma2014adam} as the optimizer.

\paragraph{\ourmethod{}.} - We performed grid search for the combination ($\alpha$) and learning rate ($\eta$) parameters. Specifically for $\alpha$ we searched over $\{0.05, 0.2, 0.35, 0.5\}$ and $\{1e-5, 5e-6, 1e-6\}$ for $\eta$. We apply LoRA modules to all linear layers within the self-attention and cross-attention layers. We set the rank to $8$ and the scaling factor to $32$.

\paragraph{Finetune.} We performed a grid search for the learning rate parameter. Specifically, we explored learning rates in the range $\{1e-5, 5e-6, 1e-6\}$, and the optimal learning rate chosen is $\eta = 1e-5$.

\paragraph{NegGrad+.} - We performed a grid search for the learning rate parameter. Specifically we searched over $\{1e-5, 5e-6, 2.5e-5, 1e-6, 5e-7, 1e-7\}$, and the optimal learning rate chosen is $5e-6$. 

\paragraph{SCRUB.} We performed a grid search for the learning rate and the number of epochs in which SCRUB performs unlearning steps. We searched over the range $[1e-4, 1e-7]$ with a step size of $0.5$ in multiplication, as well as $\{10, 20, 30\}$ for the number of unlearning epochs. We observed that SCRUB either failed to achieve unlearning entirely or caused the model to collapse, resulting in poor generalization. We report the results with $1e-5$ learning and $20$ unlearning epochs. We also used temperature rescaling of $4$ in the knowledge distillation loss and $5e-4$ weight decay.

\paragraph{GDR-GMA.} We performed a grid search for the learning rate and learning rate scheduling factor parameters. We searched over the range $[1e-4, 1e-7]$ with a step size of $0.5$ in multiplication, and found the optimal learning rate to be $1e-4$. We explored learning rate scheduling factors in the set $\{10, 50, 100, 200\}$ and identified $100$ as the optimal value. 

\subsection{Data preprocessing}
We adopt the data preprocessing approach outlined by~\cite{radford2023robust}. The audio samples are resampled to $16$ kHz, and log-magnitude Mel-spectrograms are generated. Specifically, we compute $80$-channel Mel-spectrograms using $25$-millisecond windows with a $10$-millisecond stride.

\section{Limitations}\label{sec:limit}
This work focuses on the low-data regime, showing strong gains over baselines in this setting. However, per-sample gradient handling increases GPU memory use. Despite these limitations, \ourmethod{} presents a compelling solution in data-constrained environments.

\end{document}